\documentclass{article}
\usepackage{times}
\usepackage{microtype}        
\usepackage{soul}             
\usepackage{url}              
\usepackage{amsmath, amsfonts, amsthm, amssymb}  
\usepackage{graphicx}         
\usepackage{svg}
\usepackage{array}
\usepackage{cellspace}
\usepackage{multicol}
\usepackage{booktabs}         
\usepackage{makecell}         
\usepackage{multirow}
\usepackage{siunitx}          
\usepackage[table]{xcolor}    
\usepackage{arydshln}         
\usepackage{rotating}         
\usepackage{stfloats}         
\usepackage{verbatim}         
\usepackage[switch]{lineno}
\usepackage{hyperref}
\usepackage{natbib}
\newcommand{\RETURN}{\textbf{return} }
\usepackage{wrapfig}
\usepackage[preprint]{icml2026}

\newtheorem{theorem}{Theorem}

\newtheorem{corollary}{Corollary}

\theoremstyle{definition}

\newcounter{problem}

\theoremstyle{remark}
\newtheorem{remark}{Remark}

\icmltitlerunning{Neuronal Stochastic Attention Circuit (NSAC) for Probabilistic Representation Learning}

\begin{document}

\twocolumn[
  \icmltitle{\large{Neuronal Stochastic Attention Circuit (NSAC) for Probabilistic Representation Learning}}

  \icmlsetsymbol{equal}{*}

  \begin{icmlauthorlist}
    \icmlauthor{Waleed Razzaq}{yyy}
    \icmlauthor{Yun-Bo Zhao}{yyy,comp}
  \end{icmlauthorlist}

  \icmlaffiliation{yyy}{Department of Automation, University of Science \& Technology of China, Hefei, China}
  \icmlaffiliation{comp}{Institute of Artificial Intelligence, Hefei Comprehensive National Science Center}
  \icmlcorrespondingauthor{Yun-Bo Zhao}{ybzhao@ustc.edu.cn}
  \icmlcorrespondingauthor{Waleed Razzaq}{waleedrazzaq@mail.ustc.edu.cn}
  \icmlkeywords{Continuous-time Attention, Neuronal Attention Circuit (NAC), Neuronal Circuit Policies (NCPs)  }
  \vskip 0.15in
]

\printAffiliationsAndNotice{}


\begin{abstract}
\small
Reliable uncertainty quantification in continuous-time (CT) representation learning remains nascent, particularly within CT attention literature. We introduce the Neuronal Stochastic Attention Circuit (NSAC), a novel biologically-inspired CT attention architecture that reformulates attention logit computation as the solution of an Ornstein–Uhlenbeck stochastic differential equation modulated by input-dependent, nonlinear interlinked gates derived from repurposed \textit{C.\ elegans} Neuronal Circuit Policies (NCPs) wiring mechanism. It induces Gaussian distribution over logits that propagates principled stochasticity through logistic-normal distribution over attention weights to yield probabilistic output. A two-term objective function combining Gaussian negative log-likelihood with an epistemic-separation regularizer enforces higher predictive variance under distributional shifts and enables joint quantification of aleatoric and epistemic uncertainty. Theoretically, we provide: (i) state stability bounds; (ii) closed-form guarantees; and (iii) frozen-coefficient error approximation. Empirically, we implement NSAC in a diverse set of learning tasks including: (i) irregular CT function approximation; (ii) multivariate regression; (iii) long-range forecasting; (iv) Industry~4.0; and (v) lane-keeping of autonomous vehicles. We observe that the NSAC remains competitive against several baselines in terms of accuracy and produces informative uncertainty estimates while being interpretable at the neuronal cell level.
\end{abstract}

\vspace{-5mm}
\section{Introduction}
Representation learning in Continuous-time (CT) domain with reliable uncertainty quantification (UQ) is essential for high-stakes, safety-critical systems such as robotics~\cite{lechner2020neural,chahine2023robust}, medical~\cite{zhang2024trajectory} and Industry~4.0~\cite{razzaq2025carle}, where failures can lead to significant human or economic loss. Two types of uncertainty exist: (1) aleatoric uncertainty, which is induced by inherent noise in the data, and (2) epistemic uncertainty, which arises from model uncertainty. While post-hoc methods such as Gaussian maximum-likelihood estimation (GMLE)~\cite{kendall2017uncertainties}, Monte-Carlo Dropout (MCD)~\cite{gal2016dropout}, Deep-Ensembles (DE)~\cite{lakshminarayanan2017simple}, and Deep Evidential Regression (DER)~\cite{amini2020deep} exist, these frameworks lack integration with the temporal dynamics of CT architectures.\\
Scaled-Dot-Product-Attention (SDPA) mechanisms have substantially advanced sequence modeling by enabling dynamic weighting of temporal dependencies \cite{vaswani2017attention}. However, the inherently discrete nature of attention computation is not well suited for CT domain. Neuronal Attention Circuit (NAC)~\cite{razzaq2025neuronal} address this limitation by rethinking attention logit computation as the solution to a linear ODE modulated by input-dependent, nonlinear interlinked gates derived from repurposing the wiring mechanism of \textit{C.elegans} nematode's Neuronal Circuit Policies (NCPs)~\cite{lechner2018neuronal}. While NAC provides solver-free scalability, its deterministic formulation prevents it from providing uncertainty estimates.\\
In this work, we extend the foundational principles of NAC by reframing attention logit computation as the solution to an Ornstein–Uhlenbeck stochastic differential equation (OU-SDE)~\cite{maller2009ornstein}, where the mean~($\phi$), mean-reversion rate ($\kappa$), and diffusion term~($\psi$) are modulated by input-dependent, nonlinear, interlinked gates derived from repurposed NCPs. We refer to this formulation as the Neuronal Stochastic Attention Circuit (NSAC). NSAC yields a Gaussian distribution over temporal attention logits and propagates stochasticity through a logistic-normal distribution over attention weights to compute attention scores and probabilistic outputs. NSAC learns quantification of both aleatoric and epistemic uncertainty via a two-term training objective combining a Gaussian negative log-likelihood (NLL) with an epistemic separation regularizer. We provide rigorous theoretical guarantees including: (i) mean-process state stability; (ii) closed-form finite-time bounds; and (iii) frozen-coefficient error approximations. We implement the NSAC in a diverse set of learning tasks, including: (i) irregular CT function approximation; (ii) multivariate regression; (iii) multivariate long-range forecasting; (iv) industrial prognostics; and (v) the lane-keeping of autonomous vehicles. NSAC remains competitive in terms of accuracy and produces informative uncertainty tubes against several state-of-the-art UQ baselines. To summarize, our contributions are as follows: \vspace{-2mm}
\begin{enumerate}
    \item To the best of our knowledge, we are the first to directly incorporate stochasticity into CT attention mechanism, enabling principled probabilistic CT representation learning.\vspace{-2mm}
    \item We derive a two-term objective function that regularizes predictive variance and supports uncertainty decomposition into aleatoric and epistemic components.\vspace{-2mm}
    \item We establish theoretical bounds including: (i) SDE mean-process state stability; (ii) closed-form time guarantees; and (iii) frozen-coefficient error approximations. \vspace{-2mm}
    \item We validate the proposed method across diverse representation learning tasks: (i) irregular CT function approximation; (ii) multivariate regression; (iii) multivariate long-range forecasting; (iv) industry~4.0 prognostics; and (v) lane-keeping of autonomous vehicles.\vspace{-3mm}
\end{enumerate}
The rest of the paper is organized as follows. Section~\ref{sec:literature} reviews the related literature and highlights the positioning of the proposed method. Section~\ref{sec:NSAC} provides comprehensive methodological details. Section~\ref{sec:evaluation} presents a comprehensive evaluation across a range of representation tasks. Section~\ref{sec:discuss} discusses the results and limitations. Finally, Section~\ref{sec:conclude} concludes the paper.

\section{Related Works}\label{sec:literature}
\textbf{Post-hoc Methods:} A large class of UQ methods augments deterministic predictors with a post-hoc mechanism for predictive variance after feature extraction. Gaussian maximum-likelihood estimation (GMLE)~\cite{kendall2017uncertainties} introduces heteroscedastic output variance but captures only aleatoric uncertainty. Bayesian approximation, such as Monte-Carlo Dropout (MCD)~\cite{gal2016dropout}, estimates uncertainty through stochastic inference-time masking, coupling uncertainty estimation to a regularization mechanism rather than explicitly modeling predictive stochasticity. Deep Ensembles (DE)~\cite{lakshminarayanan2017simple} improve calibration through aggregation over independently trained predictors but require linearly increasing memory and computational cost. Deep Evidential Regression (DER)~\cite{amini2020deep} instead parameterizes predictive evidence through a Normal-Inverse-Gamma (NIG) prior, enabling closed-form uncertainty estimates; however, its evidential output is still derived from deterministic latent representations. Across these approaches, uncertainty is imposed after representation learning, leaving the internal attention or feature-composition dynamics unchanged.\\
\textbf{CT Attention:} CT attention mechanisms address the temporal adaptivity of the SDPA by replacing the discrete layerwise transformation with a dynamical systems formulation. Prior work has modeled joint hidden-state and attention evolution through neural ODEs~\cite{chien2021continuous}, incorporated latent continuous dynamics into time-aware attention mechanisms~\cite{chen2023contiformer}, and parameterized attention matrices directly through partial differential equations (PDEs)~\cite{zhang2025continuous}. NAC~\cite{razzaq2025neuronal} interprets attention logits as the solution of a linear ODE modulated by an input-dependent nonlinear gating function inspired by NCPs~\cite{lechner2018neuronal}. These formulations improve temporal expressivity but remain fundamentally deterministic, providing no explicit mechanism for calibrated uncertainty propagation through attention itself.\\
\textbf{Neural SDE:} Neural SDE models incorporate stochasticity directly into latent dynamics. SDE-Net~\cite{kong2020sde} decomposes hidden evolution drift and diffusion components to capture predictive uncertainty, whereas Latent SDE~\cite{li2020scalable} extends this framework to variational CT latent trajectory modeling. These methods provide stochastic representations but require numerical SDE solvers such as \textit{Euler-Maruyama} and adjoint-based optimization, introducing substantial computational overhead and discretization error. More importantly, stochasticity is confined to the hidden state rather than attention generation, leaving uncertainty disconnected from the mechanism governing token interaction.\\
\textbf{Positioning of our work:} Our work addresses these gaps by unifying CT attention dynamics and uncertainty modeling within a single formulation. Specifically, NSAC extends NAC by modeling attention-logit evolution as an OU-SDE whose closed-form margin admits Gaussian distribution without requiring numerical simulation. This makes uncertainty an intrinsic architectural property of attention generation rather than a post-hoc method, enabling joint modeling of aleatoric and epistemic through two-term objective function while preserving solver-free computational efficiency.

\section{Methods}\label{sec:NSAC} 

\subsection{Preliminaries}
\subsubsection{Neuronal Attention Circuit (NAC)}
Neuronal Attention Circuit (NAC) is a biologically inspired CT attention mechanism that reformulates attention logit computation as the solution of an input-dependent first-order ODE:
\begin{equation}
\frac{da_t}{dt} = -{\omega_\tau(\mathbf{u})}a_t + {\gamma(\mathbf{u})}, 
\end{equation}
where $\mathbf{u}=[\mathbf{q};\mathbf{k}]$ denotes the sparse Top-\emph{K} query-key concatenated representation obtained through a square-root block-partitioning strategy~\cite{child2019sparse}, reducing attention complexity to $O(n\sqrt{n})$. $\omega_{\tau}$ is a \textit{learnable time-constant gate} and $\gamma$ is a \textit{content-target gate}. Both gates are derived using repurposed \textit{C. elegans} Neuronal Circuit Policies (NCPs)~\cite{lechner2018neuronal}. NCPs consist of a sparse fixed connectome consisting of sensory, interneuron, command, and motor neurons. NAC repurposes this into a flexible modifiable architecture that splits sensory neurons ($\mathcal{NN}_{\text{sensory}}$) for query~($q$), key~($k$), and value~($v$) projections, and the inter-to-motor pathways form a shared backbone ($\mathcal{NN}_{\text{backbone}}$) for computing $\gamma$ and $\omega_{\tau}$. Under locally frozen-coefficient approximation\cite{john1952integration}, NAC admits a closed-form solution:
\begin{equation}
a_t=\frac{\gamma}{\omega_{\tau}}
+\left(a_0-\frac{\gamma}{\omega_{\tau}}\right)e^{-\omega_{\tau}t},
\end{equation}
enabling an efficient solver-free forward pass~\cite{razzaq2025neuronal}.

\subsection{Neuronal Stochastic Attention Circuit (NSAC)}
We propose to view the computation of attention logits as a stochastic Gaussian distribution, interpreting them as the solution to an OU-SDE, modulated by input-dependent, nonlinear, interlinked gates derived from repurposed \textit{C.elegans} nematode NCP's wiring mechanism:
\begin{equation}
\begin{split}
da_t =
\overbrace{\underbrace{f_\kappa([\mathbf{q};\mathbf{k}],t,\theta_\kappa)}_{\kappa(\mathbf{u})} \cdot
(\underbrace{f_\phi([\mathbf{q};\mathbf{k}],t,\theta_\phi)}_{\phi(\mathbf{u})} - a_t)~dt}^{\text{drift}} ~+ \\ \overbrace{\underbrace{f_\psi([\mathbf{q};\mathbf{k}],t,\theta_{\psi})}_{\psi(\mathbf{u})}
\, d\mathcal{W}_t}^{\text{diffusion}}
\end{split}
\label{eq:lq_diff}
\end{equation}
Here, $\mathbf{u} = [\mathbf{q}; \mathbf{k}]$ is the sparse Top-\emph{K} concatenated input, $\kappa(\mathbf{u})$ denotes the learnable mean-reversion rate gate with parameters $\theta_\kappa$, $\phi(\mathbf{u})$ represents the learnable long-term mean attention gate;, and $\psi(\mathbf{u})$ gate controls the Brownian fluctuations ($\mathcal{W}_t$) in the logit trajectory. We refer to this formulation as the Neuronal Stochastic Attention Circuit (NSAC). It enables the logits to mirror input-dependent temporal dynamics found in \textit{C.elegans} nematode while simultaneously producing probabilistic outputs.\\
\textbf{Motivation behind this formulation:} The choice of the OU formulation is motivated by its established role in modeling mean-reverting dynamics in biological systems, particularly neuronal membrane potential fluctuations~\cite{ricciardi1979ornstein, jahn2011motoneuron, ditlevsen2005estimation} and synaptic noise~\cite{fellous2003synaptic}. The neural circuits of \textit{C.~elegans} exhibit similar equilibrium-seeking dynamics~\cite{iwasaki2004stochastic}. This behavior also appears to extend across scales, as attention-modulated circuits in primates show comparable mean-reverting dynamics~\cite{feng2012extensions}. Furthermore, a key advantage of the OU formulation is its closed-form solution, which avoids the discretization errors associated with numerical SDE solvers such as the \textit{Euler--Maruyama} method.\\
\textbf{Forward- pass of the NSAC using Closed-form Solution:} To obtain the attention logit state ($a_t$), the NSAC exploits the forward pass through its closed-form solution. Although the gates are time-varying and input-dependent, they evolve on discrete internal updates. Consequently, from the perspective of CT dynamics, the gates remain constant within each interval and change only at step boundaries. This separation of time scales induces a piecewise-constant (locally frozen) parameterization, reducing the dynamics to a system with constant coefficients over each interval. As a result, the forward pass admits an efficient closed-form update while retaining temporal variability across steps. Given an initial condition $a_0$, the closed-form solution for the NSAC can be written as follows:
\begin{equation}
a_t = a_0 e^{-\kappa t} + \phi (1 - e^{-\kappa t}) + \psi \int_0^t e^{-\kappa (t-s)} \, d\mathcal{W}_s
\end{equation}
The expected value and variance can be expressed as:
\begin{equation}
\begin{aligned}
\mu_t &= \mathbb{E}[a_t] = \phi + (a_0 - \phi)e^{-\kappa t} \\
\sigma_t^2 &= \mathrm{Var}[a_t] = \frac{\psi^2}{2\kappa}\left(1 - e^{-2\kappa t}\right)
\label{eq:ou_mean_variance}
\end{aligned}
\end{equation}
The derivation is available in Appendix~\ref{appendix:closed-form}. Further analysis on frozen-coefficient is available in Appendix~\ref{appendix:frozen_coefficient}.\vspace{-3mm}

\subsection{Stability Analysis}
We now investigate the stability bounds under both SDE-based and closed-form formulations.
\begin{theorem}[SDE Mean-Process State Stability] \label{theorem:state_stability}
The logits are stochastic in nature, so we analyze the stability of the mean process \(\mu_t^{(i)} = \mathbb{E}[a_t^{(i)}]\). Let $\mu_t^{(i)} = \mathbb{E}[a_t^{(i)}]$ denote the mean of the $i$-th
attention logit, governed by $d\mu_t^{(i)}/{dt} = \sum_{j=1}^{M} \kappa_{i,j}\left(\Phi_{i,j} - \mu_t^{(i)}\right), \ \text{where }\kappa_{i,j} > 0$ and $M$ are incoming connections. Define $\Phi_{\min} =\min_{j}\,\Phi_{i,j}, \Phi_{\max} = \max_{j}\,\Phi_{i,j}$. Then, for any finite horizon $t \in [0,T]$ and any initial condition
$\mu_0^{(i)} \in [\Phi_{\min},\,\Phi_{\max}]$,
\begin{equation}
  \Phi_{\min} \;\leq\; \mu_t^{(i)} \;\leq\; \Phi_{\max}.
  \label{eq:stability-bound}
\end{equation}
In the special case $M=1$, the dynamics reduce to
$d\mu_t^{(i)}/dt = \kappa(\phi - \mu_t^{(i)})$ with unique equilibrium $\phi = \Phi_{i,1}$. Bound~\ref{eq:stability-bound} holds for any
$\mu_0^{(i)} \in [\phi_{\min},\,\phi_{\max}]$, where
$\phi_{\min} = \inf f_{\phi}([\mathbf{q_i};\,\mathbf{k_1}])$ and
$\phi_{\max} = \sup f_{\phi}([\mathbf{q_i};\,\mathbf{k_1}])$
over all admissible inputs. $f_{\phi}$ applies a $\tanh$ nonlinearity, $[\phi_{\min},\,\phi_{\max}] \subseteq (-1,1)$, and the
implementation's initialization $\mu_0^{(i)} = 0$ is admissible.\\
The proof is available in Appendix~\ref{appendix:state_stability}
\end{theorem}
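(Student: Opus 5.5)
The plan is to treat the mean dynamics as a linear scalar ODE with constant (frozen) coefficients on the horizon $[0,T]$, rewrite them in equilibrium form, and read the bound directly off the explicit solution. Set $K_i=\sum_{j=1}^M \kappa_{i,j}>0$ and $\bar\Phi_i = K_i^{-1}\sum_{j}\kappa_{i,j}\Phi_{i,j}$. Then the governing equation becomes
\begin{equation*}
\frac{d\mu_t^{(i)}}{dt} = K_i\bigl(\bar\Phi_i-\mu_t^{(i)}\bigr).
\end{equation*}
Because $\kappa_{i,j}>0$, the weights $\kappa_{i,j}/K_i$ are positive and sum to one. Hence $\bar\Phi_i$ is a convex combination of the $\Phi_{i,j}$, which gives $\bar\Phi_i\in[\Phi_{\min},\Phi_{\max}]$.

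Solving this linear equation exactly gives
\begin{equation*}
\mu_t^{(i)}=\bar\Phi_i+\bigl(\mu_0^{(i)}-\bar\Phi_i\bigr)e^{-K_i t} = e^{-K_i t}\mu_0^{(i)}+\bigl(1-e^{-K_i t}\bigr)\bar\Phi_i .
\end{equation*}
This is the same structure as the mean in Eq.~\eqref{eq:ou_mean_variance}. For every $t\ge 0$ the coefficient $e^{-K_i t}$ lies in $(0,1]$, so $\mu_t^{(i)}$ is a convex combination of $\mu_0^{(i)}$ and $\bar\Phi_i$. Both points lie in the interval $[\Phi_{\min},\Phi_{\max}]$, which is convex, so $\mu_t^{(i)}$ lies there too for all $t\in[0,T]$. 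This is Bound~\eqref{eq:stability-bound}.

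As a backup that does not need the closed form, I would give an invariance argument. At $\mu=\Phi_{\max}$ the vector field equals $\sum_j\kappa_{i,j}(\Phi_{i,j}-\Phi_{\max})\le 0$, and at $\mu=\Phi_{\min}$ it is $\ge 0$. So the interval is forward invariant by a standard comparison (Nagumo-type) argument. This version also covers coefficients that vary piecewise in time, as long as the bounds are taken over the relevant pieces.

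For the case $M=1$, we have $K_i=\kappa$ and $\bar\Phi_i=\phi$. The equilibrium is unique because $\kappa>0$, and the solution is exactly the mean from Eq.~\eqref{eq:ou_mean_variance}. The bound with $[\phi_{\min},\phi_{\max}]$ follows by the same convexity argument, since both $\phi$ and $\mu_0^{(i)}$ lie in that interval. Because $f_\phi$ ends in $\tanh$, its range lies in $(-1,1)$, so $[\phi_{\min},\phi_{\max}]\subseteq[-1,1]$. The strict inclusion in $(-1,1)$ needs the set of admissible inputs to be bounded (for example compact), so that the supremum of the $\tanh$ output is not attained at $\pm1$; I would state this assumption explicitly.

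I expect the main obstacle to be the final claim that $\mu_0^{(i)}=0$ is admissible. This requires $0\in[\phi_{\min},\phi_{\max}]$, which does not follow from the $\tanh$ range alone. I would argue it by noting that $\phi$ takes both signs over the input domain, or that $\phi=0$ at a symmetric or zero input. If neither holds, I would weaken the claim so that the bound applies to the hull $\mathrm{conv}\{0,\phi_{\min},\phi_{\max}\}$. The convexity argument above gives that bound for any initial condition at no extra cost.
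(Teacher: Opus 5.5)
Your proof is correct. Your backup argument is exactly the paper's proof: the appendix only checks that the drift is $\le 0$ at $\mu=\Phi_{\max}$ and $\ge 0$ at $\mu=\Phi_{\min}$, repeats this at $\phi_{\max},\phi_{\min}$ for $M=1$, and concludes forward invariance.

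Your primary route is different. You collapse the $M$ terms into a single relaxation toward the $\kappa$-weighted average $\bar\Phi_i$ at rate $K_i=\sum_j\kappa_{i,j}$. You then read the bound off the explicit solution as a convex combination of $\mu_0^{(i)}$ and $\bar\Phi_i$. This gives you more than the paper's argument does:
\begin{itemize}
\item it identifies the equilibrium of the general-$M$ system, which the paper never names;
\item it makes explicit the exponential, monotone convergence that the appendix only asserts;
\item it gives the sharper enclosure of $\mu_t^{(i)}$ between $\mu_0^{(i)}$ and $\bar\Phi_i$ for arbitrary initial data;
\item it avoids the uniqueness or comparison step that a boundary check with non-strict inequalities tacitly needs. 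That step is harmless here because the field is affine, but the paper does not state it.
\end{itemize}
The paper's check has one advantage: it needs no solution formula, so it transfers verbatim to time-varying coefficients that stay in $[\Phi_{\min},\Phi_{\max}]$. Your closed form as written uses the constant coefficients the statement assumes, although the convex-combination structure also survives variation of constants.

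You also read the $M=1$ bound correctly with $[\phi_{\min},\phi_{\max}]$. Taken literally, \eqref{eq:stability-bound} with $\Phi_{\min}=\Phi_{\max}=\phi$ would force $\mu_t^{(i)}\equiv\phi$.

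Your two caveats are well founded, and the paper does not resolve either one.

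First, the paper asserts $[\phi_{\min},\phi_{\max}]\subseteq(-1,1)$ with no hypothesis bounding the pre-activation. What is needed is $\sup f_\phi<1$ and $\inf f_\phi>-1$, for example a continuous backbone on a compact input set. Your phrase ``not attained at $\pm1$'' is slightly off, since $\tanh$ never attains $\pm1$; the issue is the supremum reaching $1$.

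Second, the paper deduces $\phi_{\min}\le 0\le\phi_{\max}$ directly from the $\tanh$ range. That does not follow: a pre-activation bounded away from zero gives a range on one side of $0$. So admissibility of $\mu_0^{(i)}=0$ really does require your extra assumption, or your weakened statement over $\mathrm{conv}\{0,\phi_{\min},\phi_{\max}\}$. Your convex-combination argument gives the latter for free.
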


\subsubsection{Closed-Form error analysis}
We now examine the asymptotic stability, distributional error characterization, and exponential boundedness of the closed-form formulation.\\
\textit{Mean Error: }Define the instantaneous mean error $\epsilon_{\mu}(t) = \mu_t - \phi$, that measures the distance of the mean from its long-term equilibrium. From Eqn.~\ref{eq:ou_mean_variance}, with initial condition $a_0$, we have $\epsilon_{\mu}(t) = (a_0 - \phi)e^{-\kappa t}$. The point-wise absolute mean error is therefore
\begin{equation}
\left|\epsilon_{\mu}(t)\right| = |a_0 - \phi|\,e^{-\kappa t},
\end{equation}
showing that convergence of the mean follows an exact exponential law controlled by the rate parameter $\kappa$. This yields the following finite-time guarantees:
\begin{corollary}[Exponential Mean Decay Bound]\label{corollary:exponential}
If $\kappa > 0$, then for all $t \ge 0$,
\begin{equation}
|\mu_t - \phi| \;\leq\; |a_0 - \phi|\,e^{-\kappa t}
\end{equation}
\end{corollary}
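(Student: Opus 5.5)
The plan is to work directly from the closed-form mean in Eqn.~\ref{eq:ou_mean_variance}, obtained under the locally frozen-coefficient parameterization. Within a single interval, $\kappa$ and $\phi$ are treated as constants. Taking expectations in the closed-form solution, the It\^o integral $\psi\int_0^t e^{-\kappa(t-s)}\,d\mathcal{W}_s$ has zero mean, which gives $\mu_t = \phi + (a_0-\phi)e^{-\kappa t}$. Subtracting $\phi$ yields the exact identity $\mu_t - \phi = (a_0-\phi)e^{-\kappa t}$, which is the instantaneous mean error $\epsilon_\mu(t)$ defined above.

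Next, I would take absolute values. Since $e^{-\kappa t}>0$ for all real $\kappa$ and $t$, we get $|\mu_t-\phi| = |a_0-\phi|\,e^{-\kappa t}$. This equality trivially implies the stated inequality. The hypothesis $\kappa>0$ is what makes the bound meaningful: it gives $e^{-\kappa t}\le 1$ and $e^{-\kappa t}\to 0$ as $t\to\infty$, so the error is both nonincreasing and exponentially vanishing. I would note this as the asymptotic-stability content of the corollary. Positivity of $\kappa$ is presumably guaranteed by the gate's output activation, for example a softplus.

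As an independent check that does not rely on the explicit formula, I would also verify the bound from the mean ODE $d\mu_t/dt=\kappa(\phi-\mu_t)$, which is the $M=1$ case of Theorem~\ref{theorem:state_stability}. Setting $V(t)=(\mu_t-\phi)^2$ gives $\dot V=-2\kappa V$, hence $V(t)=V(0)e^{-2\kappa t}$. Taking square roots with $\mu_0=a_0$ recovers the same bound, and a Gr\"onwall-type argument would give the inequality even if $\kappa$ were only bounded below by a positive constant.

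The only step needing care is justifying that $\mathbb{E}\big[\int_0^t e^{-\kappa(t-s)}d\mathcal{W}_s\big]=0$. This holds because the integrand is deterministic and square-integrable on $[0,t]$, so the stochastic integral is a martingale started at zero. This is the step I would state explicitly, citing the derivation in Appendix~\ref{appendix:closed-form}. Everything else is elementary.
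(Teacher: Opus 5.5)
Your proposal is correct and follows the paper's argument: the paper also subtracts $\phi$ from the closed-form mean in Eqn.~\ref{eq:ou_mean_variance} to get $\epsilon_\mu(t)=(a_0-\phi)e^{-\kappa t}$, so taking absolute values gives the bound with equality. Your Lyapunov check via $d\mu_t/dt=\kappa(\phi-\mu_t)$ is a valid extra confirmation but is not needed.
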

\begin{remark}
If $\kappa > 0$, then $\lim_{t \to \infty} e^{-\kappa t} = 0,$ and therefore $\lim_{t \to \infty} \ \text{,} \ \ \mu_t = \phi$. Hence, the mean converges exponentially to its equilibrium value $\phi$ at rate $\kappa$. If $\kappa < 0$, then $e^{-\kappa t} = e^{|\kappa|t}$, which diverges as $t \to \infty$. Consequently, the mean grows exponentially away from $\phi$ in magnitude. If $\kappa = 0$, then $d\mu_t/{dt} = 0$, so that $\mu_t$ remains constant for all $t$. Therefore, for bounded dynamics that converge to an interpretable long-term mean, it is necessary that $\kappa > 0$.
\end{remark}
\begin{corollary}[Variance Boundedness and Monotone Growth]
If $\kappa > 0$ and $\psi > 0$, then the $\sigma_t^2$ is monotonically increasing in $t$ and satisfies
\begin{equation}
0 \;\leq\; \sigma_t^2 \;\leq\; \sigma_\infty^2 \;=\; \frac{\psi^2}{2\kappa}, 
\end{equation}
with $\lim_{t \to \infty} \sigma_t^2 = \sigma_\infty^2$.
\end{corollary}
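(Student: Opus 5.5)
The plan is to work directly from the closed-form variance in Eqn.~\ref{eq:ou_mean_variance}, $\sigma_t^2 = \frac{\psi^2}{2\kappa}(1-e^{-2\kappa t})$, which holds under the frozen-coefficient parameterization for each interval. Since $\kappa>0$, the prefactor $\sigma_\infty^2=\psi^2/(2\kappa)$ is a well-defined positive constant (positivity of $\psi$ is only needed for strictness; $\psi^2\ge 0$ suffices for the inequalities). The statement then reduces to elementary properties of the scalar function $g(t)=1-e^{-2\kappa t}$ on $t\ge 0$, and I will establish three things in order: the two-sided bound, monotonicity, and the limit.

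\textbf{Bounds.} For $t\ge 0$ and $\kappa>0$ we have $-2\kappa t\le 0$, hence $0<e^{-2\kappa t}\le 1$. This gives $0\le g(t)<1$. Multiplying by the nonnegative constant $\psi^2/(2\kappa)$ yields $0\le \sigma_t^2\le \sigma_\infty^2$, with $\sigma_0^2=0$ matching the deterministic initial condition $a_0$.

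\textbf{Monotonicity.} Differentiate to obtain
\begin{equation*}
\frac{d\sigma_t^2}{dt}=\frac{\psi^2}{2\kappa}\cdot 2\kappa e^{-2\kappa t}=\psi^2 e^{-2\kappa t}.
\end{equation*}
This is strictly positive when $\psi>0$, so $\sigma_t^2$ is strictly increasing. As a cross-check, the derivative agrees with the variance ODE $\dot\sigma_t^2=-2\kappa\sigma_t^2+\psi^2$ obtained from It\^o's isometry in Appendix~\ref{appendix:closed-form}. That ODE also shows $\sigma_\infty^2$ is its unique stable equilibrium, so the remaining claims follow by the same reasoning as Corollary~\ref{corollary:exponential}.

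\textbf{Limit.} Since $e^{-2\kappa t}\to 0$ as $t\to\infty$ when $\kappa>0$, we get $\sigma_t^2\to\sigma_\infty^2$. Equivalently, $|\sigma_t^2-\sigma_\infty^2|=\sigma_\infty^2 e^{-2\kappa t}$, which converges exponentially at rate $2\kappa$.

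The only genuine subtlety is justifying that the closed-form variance applies at all, i.e.\ that $\kappa$ and $\psi$ are constant over the interval in question. I will invoke the frozen-coefficient assumption stated before Eqn.~\ref{eq:ou_mean_variance} and treat each interval separately. Within an interval, every step above is routine calculus.
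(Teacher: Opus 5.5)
Your proposal is correct and follows the paper's route. The paper reads monotone growth, the bound $0\le\sigma_t^2\le\psi^2/(2\kappa)$, and the limit directly off the It\^o-isometry closed form $\sigma_t^2=\frac{\psi^2}{2\kappa}(1-e^{-2\kappa t})$ in Appendix~\ref{appendix:closed-form}, and your derivative and limit computations simply spell out those elementary steps. One small citation slip, which is not needed for the argument: the variance ODE $\dot\sigma_t^2=-2\kappa\sigma_t^2+\psi^2$ used in your cross-check appears in the proof of Theorem~\ref{theorem:frozen}, not in the closed-form derivation.
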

\begin{remark}
The stationary variance $\sigma_\infty^2 = \frac{\psi^2}{2\kappa}$ is determined jointly by the diffusion scale $\psi$ and the mean-reversion rate $\kappa$. A larger value of $\kappa$ reduces the stationary variance, whereas a larger value of $\psi$ amplifies it. Consequently, $\kappa$ plays a dual role: (i) It governs the speed of mean reversion (Corollary~\ref{corollary:exponential}); (ii) It determines the ceiling of the stationary variance. Thus, increasing $\kappa$ simultaneously accelerates convergence toward the equilibrium mean and suppresses long-run stochastic fluctuations.
\end{remark}
\begin{corollary}[Uniform Initialization]
If the initialization is known only to belong to a bounded set, i.e., $|a_0 - \phi| \le \mathcal{M} \text{ for some } \mathcal{M} > 0 $,
then the mean error admits the uniform bound
\begin{equation}
|\mu_t - \phi| \le \mathcal{M} e^{-\kappa t}, \qquad \forall t \ge 0.
\end{equation}
\end{corollary}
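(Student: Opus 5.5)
The plan is to reduce the statement to the explicit mean-error formula already derived before Corollary~\ref{corollary:exponential}, and then apply the hypothesis on the initialization uniformly in $t$. Under the locally frozen-coefficient parameterization, the mean process $\mu_t = \mathbb{E}[a_t]$ from Eqn.~\ref{eq:ou_mean_variance} gives
\begin{equation*}
\mu_t - \phi = (a_0 - \phi)\,e^{-\kappa t}.
\end{equation*}
This follows either from solving the linear ODE $d\mu_t/dt = \kappa(\phi - \mu_t)$, obtained by taking expectations in the OU-SDE since the It\^o integral has zero mean, or directly from the closed-form solution. Taking absolute values recovers the identity $|\epsilon_\mu(t)| = |a_0 - \phi|\,e^{-\kappa t}$ that underlies Corollary~\ref{corollary:exponential}.

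The key step is to bound the prefactor. Because $e^{-\kappa t} > 0$ for every real $\kappa$ and $t$, multiplying the hypothesis $|a_0 - \phi| \le \mathcal{M}$ by $e^{-\kappa t}$ preserves the inequality. This gives $|\mu_t - \phi| \le \mathcal{M} e^{-\kappa t}$ for each fixed $t \ge 0$. The bound on the right-hand side does not depend on the particular $a_0$, so it holds uniformly over every initialization in the admissible set $\{a_0 : |a_0 - \phi| \le \mathcal{M}\}$. That is the content of ``uniform'' in the corollary.

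Strictly, the inequality holds for any sign of $\kappa$. I will nevertheless state the standing assumption $\kappa > 0$, as in Corollary~\ref{corollary:exponential} and the subsequent Remark, because only then is the bound informative: it decays to $0$, so the convergence $\mu_t \to \phi$ is uniform over the admissible initial set.

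There is no real obstacle here. The only point needing care is that $\kappa$ and $\phi$ are treated as frozen constants over the interval, as in the closed-form derivation, so that $\phi$ in the hypothesis and $\phi$ in the conclusion refer to the same quantity. If the gates vary across steps, the argument applies interval by interval, with $\mathcal{M}$ bounding the deviation at the start of each interval.
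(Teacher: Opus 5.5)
Your argument is correct and follows the paper's route. The paper obtains the corollary directly from the closed-form identity $|\mu_t - \phi| = |a_0 - \phi|\,e^{-\kappa t}$, derived from Eqn.~\ref{eq:ou_mean_variance}, by bounding the prefactor with $\mathcal{M}$, which is exactly what you do. Your closing interval-by-interval remark is unnecessary: across intervals $\phi$ itself changes, so $\mathcal{M}$ would also have to absorb those shifts. It does not affect the proof of the stated bound.
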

\begin{remark}
This highlights that exponential convergence of the mean holds uniformly across all admissible initial conditions, with the constant $\mathcal{M}$ capturing the worst-case initial deviation.
\end{remark}
\begin{corollary}[Convergence Time to Accuracy]
For a target tolerance $\delta > 0$, solving $|a_0 - \phi| e^{-\kappa t} \le \delta$ yields the convergence time threshold
\begin{equation}
t \;\geq\; \frac{1}{\kappa}\ln\frac{|a_0 - \phi|}{\delta}
\end{equation}
\end{corollary}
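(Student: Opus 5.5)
The plan is to read this statement as a direct consequence of the exact mean-error identity $|\epsilon_\mu(t)| = |a_0-\phi|\,e^{-\kappa t}$ derived just before Corollary~\ref{corollary:exponential}. That identity comes from the closed-form mean in Eqn.~\ref{eq:ou_mean_variance}. Because the error has an exact expression, the condition $|\mu_t-\phi|\le\delta$ is equivalent to the scalar inequality $|a_0-\phi|e^{-\kappa t}\le\delta$. We therefore only need to invert this inequality in $t$, using $\kappa>0$ (as required by the preceding remark) and $\delta>0$.

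First, we treat the degenerate case $a_0=\phi$. Then $\mu_t=\phi$ for all $t\ge 0$, and the tolerance holds trivially for every $t\ge0$. The logarithm $\ln(0/\delta)$ is $-\infty$, so the threshold condition is vacuous, which agrees with this. More generally, if $0<|a_0-\phi|\le\delta$, the right-hand side of the threshold is nonpositive, so every $t\ge 0$ qualifies. This is again consistent, since $e^{-\kappa t}\le 1$ already gives the tolerance.

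Second, for $|a_0-\phi|>0$ we divide by $|a_0-\phi|$ to get $e^{-\kappa t}\le \delta/|a_0-\phi|$. Both sides are positive, so we may apply the strictly increasing map $\ln$, which preserves the inequality. This gives $-\kappa t\le \ln\bigl(\delta/|a_0-\phi|\bigr)$. Multiplying by $-1/\kappa<0$ reverses the inequality and gives $t\ge \frac{1}{\kappa}\ln\frac{|a_0-\phi|}{\delta}$. Every step is an equivalence, so the condition is both necessary and sufficient. Hence the stated threshold is the exact first time after which the mean stays within $\delta$ of $\phi$. It stays there because $t\mapsto |a_0-\phi|e^{-\kappa t}$ is monotonically decreasing.

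There is no real obstacle. The only points needing care are the sign of $\kappa$, which must be positive so that the inequality flips correctly when dividing, and the edge case $a_0=\phi$, where the logarithm is undefined. I would state the latter separately, as above. Finally, I would note that the same argument with $\mathcal{M}$ in place of $|a_0-\phi|$, via the Uniform Initialization corollary, gives the initialization-independent threshold $t\ge \kappa^{-1}\ln(\mathcal{M}/\delta)$.
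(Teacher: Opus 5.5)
Your proposal is correct and follows the paper's argument, which consists only of inverting $|a_0-\phi|e^{-\kappa t}\le\delta$ for $\kappa>0$ by taking logarithms and dividing by $-\kappa$. Your separate handling of the cases $a_0=\phi$ and $|a_0-\phi|\le\delta$ (where the logarithm is undefined or the threshold is nonpositive) is a careful addition the paper omits, but it does not change the approach.
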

\begin{remark}
The convergence time of the mean is inversely proportional to $\kappa$, and scales only logarithmically with $\delta$. Intuitively, a larger $\kappa$ accelerates contraction of the mean toward equilibrium, leading to faster attainment of any desired tolerance.
\end{remark}

\subsection{Designing NSAC as Neural Network Layer}
We now provide the design steps to convert the NSAC into a neural network layer informed by preceding stability analysis. It consists of 6 steps: (i) gating structures; (ii) handling time ($t$); (iii) stochastic attention weights; (iv) stochastic attention scores; (v) scaling to multi-head; (vi) probabilistic output. An illustration of the internal architecture of NSAC layer is provided in Figure~\ref{fig:nsac_architecture}.
\begin{wrapfigure}{r}{0.45\columnwidth}
\centering
\includegraphics[width=0.22\textwidth]{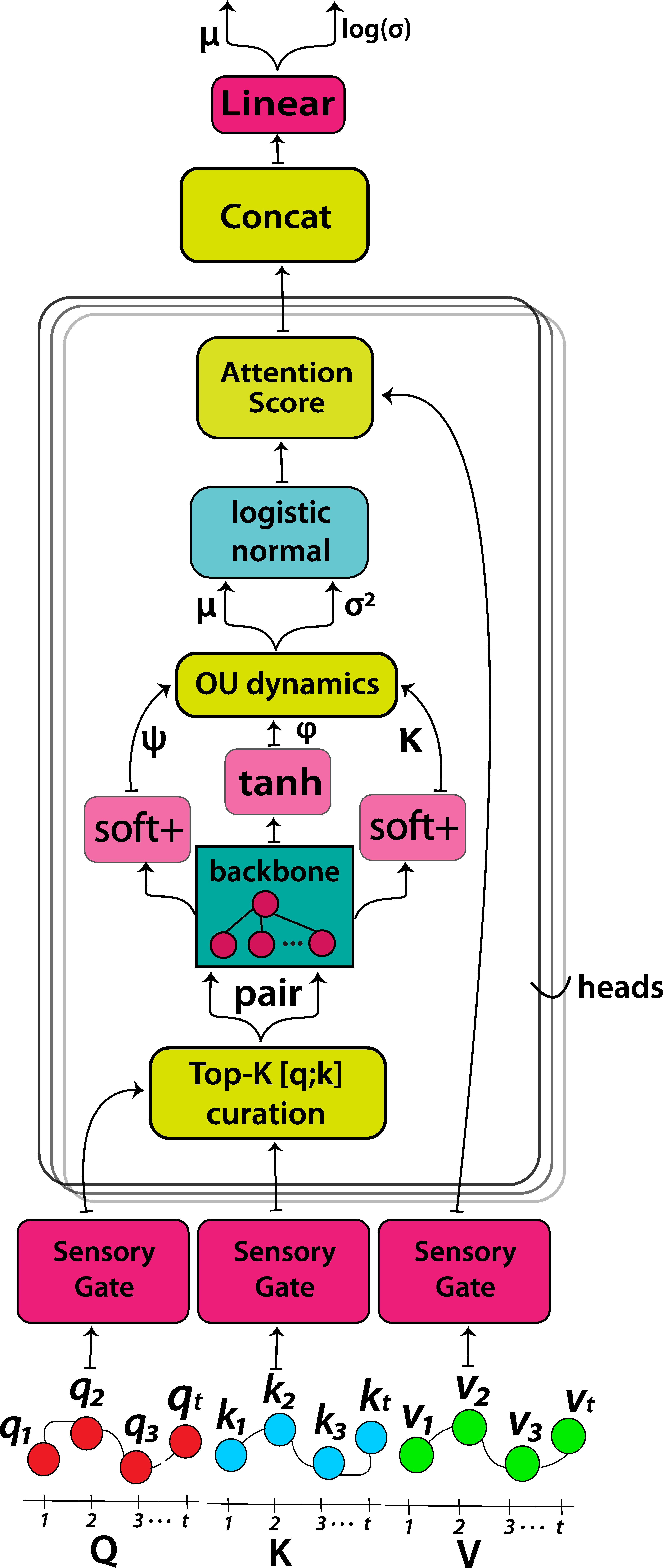}
\caption{Internal architecture of NSAC.}
\label{fig:nsac_architecture}
\vspace{-8pt}
\end{wrapfigure}
\textbf{Gating Structures:} $\mathcal{NN}_{\text{sensory}}$ is used to project the $q$, $k$, and $v$ vectors. This design produces structured, context-aware representations rather than collapsing inputs through a linear layer, thereby preserving locality and modularity and enhancing information routing. \textit{Intuitively}, these functions like biological receptors, converting external stimuli into organized signals that feed downstream circuits.\\
To modulate stochastic dynamics, we modified $\mathcal{NN}_{\text{backbone}}$ to produce three output heads producing $\kappa$, $\phi$, and $\psi$ and integrate neuronal activity over time to generate outputs that regulate signal flow. To satisfy the stability requirements, we enforce positivity on $\kappa$ and $\psi$ using a $\mathrm{softplus}$ and apply $\mathrm{tanh}$ nonlinearity to $\phi$ to allow passing negative logits. Rather than learning these parameters independently, they share a common representation, which offers several advantages: (i) faster convergence during training and (ii) improved modeling of temporal and structural dependencies. The stochastic attention logit mean ($\mu_{ou}$) and variance ($\sigma^2_{ou}$) are updated using Eqn.~\ref{eq:ou_mean_variance} under initial condition $a_0 = 0$:
\begin{equation}
\begin{aligned}
\mu_{ou} &= \phi (1 - e^{-\kappa t}) \\
\sigma^2_{ou} &= \frac{\psi^2}{2\kappa}\left(1 - e^{-2\kappa t}\right)
\end{aligned}
\label{eq:ou}
\end{equation}
\textbf{Handling Time ($t$):} The NSAC adapts its temporal dynamics to the task following \cite{hasani2022closed} by constructing a per-head internal normalized time vector
$t = \sigma(-t_a t_{sample} + t_b)$, where $t_a$ and $t_b$ are learnable affine parameters and $t_{sample}$ is derived from the input sample. When the task contains meaningful temporal structure, $t_{sample}$ introduces input-dependent modulation of the dynamics. When temporal information is not meaningful, $t_{sample}$ is set to 1, and $t$ reduces to a learned vector through $\sigma(-t_a + t_b)$, allowing the model to adaptively regulate the evolution rate of its internal stochastic dynamics. In both cases, the $\sigma$ ensures $t \in [0,1]$, supporting stable stochastic dynamics modeling.\\
\textbf{Stochastic Weights:} The stochastic attention logits are modeled as Gaussian random variables per sample, with independent Brownian motion realizations per logit: 
$a_t = \mu_{\text{ou}} + \sigma_{\text{ou}} \epsilon$, where $\epsilon \sim \mathcal{N}(0,1)$. The logits are then normalized using $\alpha_t = \mathrm{softmax}(a_t)$, resulting in a logistic-normal distribution over the attention weights~\cite{atchison1980logistic}. This formulation preserves OU-driven stochastic dynamics while remaining fully differentiable.\\
\textbf{Stochastic Attention Scores:} The attention scores are then computed by multiplying the \(\alpha_t\) with the value vector \(v_t\):
\begin{equation}
\text{NSAC}(q, k, v) = \alpha_t \cdot v_t
\end{equation}
\textbf{Scalling to Multihead:} To scale this mechanism to multihead attention, we divide the input sequence into $H$ independent subspaces (heads) of dimension $d_\text{model}/H$, yielding query, key, and value tensors $(q^{(h)}, k^{(h)}, v^{(h)})$ for $h \in \{1, \dots, H\}$. For each head, the pairwise mean ($\mu_{ou}^{(h)}$) and variance ($\sigma_{ou}^{{(h)}^2}$) are computed using Eqn.~\ref{eq:ou}, followed by the logistic-normal distribution to calculate the stochastic attention weights $\alpha^{(h)}_t$, which are then multiplied with the value vector $v^{(h)}$, producing head-specific attention scores. Finally, these scores are concatenated and linearly projected into twice the $d_\text{model}$ dimension.\\
\textbf{Probabilistic Output:} A linear projection splits the learned representation into two $d_\text{model}$ dimension streams, producing a predictive mean ($\mu_{out}$) and log-standard- deviation ($s_{out}$). These define a heteroscedastic Gaussian predictive distribution, allowing the model to produce probabilistic outputs.
\begin{equation}
p(y \mid x) = \mathcal{N}\big(\mu_{out},\sigma^2_{out}\big), 
\quad 
\sigma_{out} = e^{(s_{out})}. 
\end{equation}
\vspace{-6mm}

\begin{algorithm}[htbp]
\small
\caption{Training Algorithm for NSAC}
\label{algo:train_step_nsac_expectation}
\begin{algorithmic}
\REQUIRE Data $(x_{\mathrm{ID}}, y_{\mathrm{ID}})$, NSAC circuit $f_\theta$, optimizer $\mathcal{O}$, MC samples $N_{\text{mc}}$, OOD param. $(\mu_{\mathrm{\mathrm{pert}}}, \sigma_{pert})$
\ENSURE Total loss: $\mathcal{L}_{\text{total}}$
\STATE Surrogate OOD samples: $x_{\mathrm{OOD}} \gets x_{\mathrm{ID}} + \mathcal{N}(\mu_{pert}, \sigma_{pert})$
\FOR{$n = 1$ to $N_{\text{mc}}$}
    \STATE ID: $(\mu_{\mathrm{out}\sim \mathrm{ID}}^{(n)}, \sigma_{\mathrm{out}\sim \mathrm{ID}}^{(n)}) \gets f_\theta(x_{\mathrm{ID}})$
    \STATE OOD: $(\mu_{\mathrm{out}\sim \mathrm{OOD}}^{(n)}, \sigma_{\mathrm{out}\sim \mathrm{OOD}}^{(n)}) \gets f_\theta(x_{\mathrm{OOD}})$
\ENDFOR
\STATE Compute losses:  
\STATE \quad $\mathcal{L}_{\text{nll}} \gets \mathbb{E}[\text{NLL}(y_{\mathrm{out}\sim \mathrm{ID}}, \mu_{\mathrm{out}\sim \mathrm{ID}}, \sigma_{\mathrm{out}\sim \mathrm{ID}})]$  
\STATE \quad $\mathcal{L}_{\text{reg}} \gets \mathbb{E}[\text{Reg}(\text{Var}[\mu_{\mathrm{out}\sim \mathrm{ID}}], \text{Var}[\mu_{\mathrm{out}\sim\mathrm{OOD}}]]$
\STATE Total loss: $\mathcal{L}_{\text{total}} \gets \mathcal{L}_{\text{nll}} + \lambda  \mathcal{L}_{\text{reg}}$
\STATE Update: $\theta \gets \theta - \nabla_\theta \mathcal{L}_{\text{total}}$ using $\mathcal{O}$
\STATE \RETURN $\mathcal{L}_{\text{total}}$
\end{algorithmic}
\end{algorithm}

\subsection{Learning Uncertainty through Stochasticity}
NSAC learns predictive uncertainty through a two-term objective combining predictive accuracy and epistemic uncertainty separation. Predictive accuracy is enforced via the Gaussian negative log-likelihood (\(\mathcal{L}_{\mathrm{NLL}}\)) evaluated on in-distribution (ID) data samples (\(\mathcal{D}_{\mathrm{ID}}\)):
\begin{equation}
\mathcal{L}_{\mathrm{nll}} =
\frac{1}{2} \, \mathbb{E}_{(x,y)\sim\mathcal{D}_{\mathrm{ID}}} 
\Big[ \log(2\pi) + 2s_{out} + \frac{(y - \mu_{out})^2}{e^{2s_{out}}} \Big]
\end{equation}
To encourage separation between ID and out-of-distribution (OOD), surrogate OOD samples are generated per head using Gaussian perturbations:
\begin{equation}
x_{\mathrm{OOD}} = x_{\mathrm{ID}} + \xi, 
\quad 
\xi \sim \mathcal{N}(\mu_{\mathrm{pert}}, \sigma_{\mathrm{pert}}^2)
\end{equation}
This serves as a scalable, domain-agnostic mechanism for inducing discriminative variance separation. For each input, \(N\) stochastic forward passes through stochastic weights produce predictive means \(\{\mu^{(n)}_{out}\}_{n=1}^{N}\). The variance across these stochastic predictions is used as a measure of model-induced epistemic uncertainty. Compared with ID inputs, the regularization term promotes greater uncertainty for perturbed inputs:
\begin{equation}
\mathcal{L}_{\mathrm{reg}}
=
\log
\left(
1 +
\frac{
\mathbb{E}_{x\sim\mathcal{D}_{\mathrm{ID}}}\big[\mathrm{Var}[\mu^{(n)}_{out\sim\mathrm{ID}}]\big]
}{
\mathbb{E}_{x\sim\mathcal{D}_{\mathrm{OOD}}}\big[\mathrm{Var}[\mu^{(n)}_{out\sim\mathrm{OOD}}]\big] + \epsilon
}
\right)
\end{equation}
where \(\epsilon\) ensures numerical stability. The final training objective is:
\begin{equation}
\mathcal{L} = \mathcal{L}_{\mathrm{nll}} + \lambda \, \mathcal{L}_{\mathrm{reg}}
\end{equation}
with \(\lambda\) controlling the strength of the regularizer. This objective encourages accurate predictions on \(\mathcal{D}_{\mathrm{ID}}\) data while increasing epistemic uncertainty under \(\mathcal{D}_{\mathrm{OOD}}\), improving uncertainty separation and robustness.

\section{Evaluation}\label{sec:evaluation} 
We evaluate NSAC across diverse set of learning domains including: (i) irregular CT function approximation; (ii) multivariate regression; (iii) multivariate long-range forecasting (iv) industry~4.0 prognostics; and (v) lane-keeping of autonomous-vehicles. Additional experiments including: (i) detailed ablation analysis on key parameters; (ii) closed-loop analysis of AVs; and (iii) scalability analysis are provided in Appendix~\ref{appendix:additional_experiments}.\\
\textbf{Baselines:} We compare NSAC against several state-of-the-art UQ techniques: (i) GMLE~\cite{kendall2017uncertainties}; (ii) MCD~\cite{gal2016dropout}; (iii) DE~\cite{lakshminarayanan2017simple}; (iv) DER~\cite{amini2020deep}; and (v) SDE-Net~\cite{kong2020sde}. For a fair comparison, the penultimate layer of all post-hoc baselines uses NAC with the same model dimensions ($d_{\text{model}}$), attention head ($num_{heads}$), sparsity, and Top-\emph{K} as NSAC.\\
\textbf{Test Setup and Metrics:} We train each model with five random seeds and report the mean and standard deviation of the Mean-Squared-Error (MSE) for predictive accuracy, Negative Log-likelihood (NLL) for both prediction error and uncertainty estimation, Continuous Ranked Probability Score (CRPS) for probabilistic forecast quality, and the Expected Calibration Error (ECE) for calibration of uncertainty estimates. CRPS measures the accuracy of the predicted distribution, while ECE captures how well predicted probabilities are calibrated. Ideally, both should be low; in practice, a balance is necessary. Sharper distributions can improve CRPS but may worsen ECE, whereas broader distributions tend to improve calibration at the expense of precision. More detailed definitions of metrics are provided in Appendix~\ref{appendix:metrics} and data descriptions, preprocessing, and neural network details are available in Appendix~\ref{appendix:experimental_detail}. Table~\ref{tab:results} reports all the quantitative results. 

\begin{table*}[t]
\centering
\caption{Quantitative analysis of all methods.}
\resizebox{0.97\textwidth}{!}{%
\begin{tabular}{l|ccccccccccc}
\toprule
\multirow{2}{*}{\textbf{Model}} & \multirow{2}{*}{\textbf{Metrics}} & \multirow{2}{*}{\textbf{Spiral}} & \multicolumn{2}{c}{\textbf{Multivariate Regression}} & \multicolumn{2}{c}{\textbf{Long-range Forecasting}} & \multicolumn{3}{c}{\textbf{Industry~4.0}}& \multicolumn{2}{c}{\textbf{Autonomous Vehicles}}\\

\cmidrule(lr){4-5}  \cmidrule(lr){6-7} \cmidrule(lr){8-10} \cmidrule(lr){11-12}
 & & & \textbf{Boston} & \textbf{Kin8nm} & \textbf{ETTm1} & \textbf{J.Climate} & \textbf{XJTU-SY} & \textbf{PRONOSTIA} & \textbf{HUST} & \textbf{Udacity} & \textbf{CarRacing} \\
\midrule

\multirow{4}{*}{\rotatebox{45}{\small \textbf{NSAC}}} 
& MSE~$(\downarrow)$ 
& \textbf{0.0002\textsuperscript{\scriptsize $\pm$0.0001}}
& \textbf{0.0301\textsuperscript{\scriptsize $\pm$0.0061}}
& \textbf{0.0327\textsuperscript{\scriptsize $\pm$0.0004}}
& 0.0099\textsuperscript{\scriptsize $\pm$0.0042}
& \textbf{0.1675\textsuperscript{\scriptsize $\pm$0.0627}}
& \textbf{0.0048\textsuperscript{\scriptsize $\pm$0.0018}}
& 0.0294\textsuperscript{\scriptsize $\pm$0.0123}
& \textbf{0.0033\textsuperscript{\scriptsize $\pm$0.0008}}
& \textbf{0.0249\textsuperscript{\scriptsize $\pm$0.0068}}
& 0.0154\textsuperscript{\scriptsize $\pm$0.0064}
\\

& NLL~$(\downarrow)$ 
&-2.3483\textsuperscript{\scriptsize $\pm$0.0536}
&\textbf{-0.2665\textsuperscript{\scriptsize $\pm$0.1597}}
&\textbf{-0.3870\textsuperscript{\scriptsize $\pm$0.0596}}
&\textbf{-1.3816\textsuperscript{\scriptsize $\pm$0.3152}}
&\textbf{-1.9114\textsuperscript{\scriptsize $\pm$0.2946}}
&-0.0219\textsuperscript{\scriptsize $\pm$0.0012}
&-1.7521\textsuperscript{\scriptsize $\pm$0.1832}
&-1.5889\textsuperscript{\scriptsize $\pm$0.1378}
&-0.4122\textsuperscript{\scriptsize $\pm$0.1679}
&\textbf{-1.6450\textsuperscript{\scriptsize $\pm$0.2422}} \\

& CRPS~$(\downarrow)$ 
& \textbf{0.0095\textsuperscript{\scriptsize $\pm$0.0021}}
& 0.0940\textsuperscript{\scriptsize $\pm$0.0093}
& \textbf{0.1029\textsuperscript{\scriptsize $\pm$0.0007}}
& 0.1037\textsuperscript{\scriptsize $\pm$0.0202}
& \textbf{0.1422\textsuperscript{\scriptsize $\pm$0.0977}}
& \textbf{0.0261\textsuperscript{\scriptsize $\pm$0.0049}}
& \textbf{0.0310\textsuperscript{\scriptsize $\pm$0.0032}}
& 0.0231\textsuperscript{\scriptsize $\pm$0.0031}
& \textbf{0.0814\textsuperscript{\scriptsize $\pm$0.0075}	}
& 0.7271\textsuperscript{\scriptsize $\pm$0.2619}\\

& ECE~$(\downarrow)$ 
& 0.1631\textsuperscript{\scriptsize $\pm$0.0435}
& 0.3974\textsuperscript{\scriptsize $\pm$0.0334} 
& 0.3085\textsuperscript{\scriptsize $\pm$0.0084}
& 0.3970\textsuperscript{\scriptsize $\pm$0.1024}
& 0.3179\textsuperscript{\scriptsize $\pm$0.0609}
& \textbf{0.1404\textsuperscript{\scriptsize $\pm$0.0098}}
& \textbf{0.1337\textsuperscript{\scriptsize $\pm$0.0106}}
& \textbf{0.0841\textsuperscript{\scriptsize $\pm$0.0104}}
& 0.1982\textsuperscript{\scriptsize $\pm$0.0335}
& \textbf{0.0763\textsuperscript{\scriptsize $\pm$0.1265}} \\

\midrule

\multirow{4}{*}{\rotatebox{45}{\small \textbf{GMLE}}} 
& MSE~$(\downarrow)$ 
& 0.0034\textsuperscript{\scriptsize $\pm$0.0036}
& 0.0304\textsuperscript{\scriptsize $\pm$0.0063}
& \textbf{0.0327\textsuperscript{\scriptsize $\pm$0.0004}}
& 0.0078\textsuperscript{\scriptsize $\pm$0.0032}
& 0.8878\textsuperscript{\scriptsize $\pm$0.0545}
& 0.0221\textsuperscript{\scriptsize $\pm$0.0022}
& 0.0588\textsuperscript{\scriptsize $\pm$0.0274}
& 0.0079\textsuperscript{\scriptsize $\pm$0.0034}
& 0.0251\textsuperscript{\scriptsize $\pm$0.0068}
& 0.1012\textsuperscript{\scriptsize $\pm$0.1045} \\

& NLL~$(\downarrow)$ 
& -2.1598\textsuperscript{\scriptsize $\pm$0.3487}
& -0.2592\textsuperscript{\scriptsize $\pm$0.1759}
& -0.3893\textsuperscript{\scriptsize $\pm$0.0593}
& -0.9513\textsuperscript{\scriptsize $\pm$0.0313}
& 0.9812\textsuperscript{\scriptsize $\pm$0.5706}
& 0.2280\textsuperscript{\scriptsize $\pm$0.0988}
& -1.4809\textsuperscript{\scriptsize $\pm$0.2681}
& -1.1007\textsuperscript{\scriptsize $\pm$0.2740}
& -0.4129\textsuperscript{\scriptsize $\pm$0.1664}
& -0.4716\textsuperscript{\scriptsize $\pm$0.0933} \\

& CRPS~$(\downarrow)$ 
& 0.3003\textsuperscript{\scriptsize $\pm$0.0063}
& 0.0937\textsuperscript{\scriptsize $\pm$0.0091}
& 0.1031\textsuperscript{\scriptsize $\pm$0.0008}
& 0.0755\textsuperscript{\scriptsize $\pm$0.0043}
& 0.8181\textsuperscript{\scriptsize $\pm$0.0156}
& 0.1353\textsuperscript{\scriptsize $\pm$0.0123}
& 0.0981\textsuperscript{\scriptsize $\pm$0.0077}
& 0.0964\textsuperscript{\scriptsize $\pm$0.0079}
& 0.0816\textsuperscript{\scriptsize $\pm$0.0075}
& 0.2087\textsuperscript{\scriptsize $\pm$0.0597}\\

& ECE~$(\downarrow)$ 
& 0.4727\textsuperscript{\scriptsize $\pm$0.0864}
& 0.4084\textsuperscript{\scriptsize $\pm$0.0315}
& 0.5144\textsuperscript{\scriptsize $\pm$0.0055}
& 0.4457\textsuperscript{\scriptsize $\pm$0.0378}
& 0.6977\textsuperscript{\scriptsize $\pm$0.0580}
& 0.8824\textsuperscript{\scriptsize $\pm$0.0201}
& 0.9066\textsuperscript{\scriptsize $\pm$0.0068}
& 0.8609\textsuperscript{\scriptsize $\pm$0.0403}
& 0.2060\textsuperscript{\scriptsize $\pm$0.0340}
& 0.7531\textsuperscript{\scriptsize $\pm$0.1785} \\

\midrule

\multirow{4}{*}{\rotatebox{45}{\small \textbf{DE}}} 
& MSE~$(\downarrow)$ 
& 0.0071\textsuperscript{\scriptsize $\pm$0.0127}
& 0.0303\textsuperscript{\scriptsize $\pm$0.0061}
& \textbf{0.0327\textsuperscript{\scriptsize $\pm$0.0005}}
& \textbf{0.0059\textsuperscript{\scriptsize $\pm$0.0025}}
& 0.6881\textsuperscript{\scriptsize $\pm$0.2989}
& 0.0139\textsuperscript{\scriptsize $\pm$0.0023}
& 0.0536\textsuperscript{\scriptsize $\pm$0.0372}
& 0.0034\textsuperscript{\scriptsize $\pm$0.0005}
& 0.0251\textsuperscript{\scriptsize $\pm$0.0069}
& 0.0362\textsuperscript{\scriptsize $\pm$0.0191} \\

& NLL~$(\downarrow)$ 
& \textbf{-2.5632\textsuperscript{\scriptsize $\pm$0.3694}}
& -0.2633\textsuperscript{\scriptsize $\pm$0.1636}
& -0.3905\textsuperscript{\scriptsize $\pm$0.0577}
& -0.8922\textsuperscript{\scriptsize $\pm$0.0712}
& 2.3523\textsuperscript{\scriptsize $\pm$1.7854}
& 0.3432\textsuperscript{\scriptsize $\pm$0.0877}
& -1.6891\textsuperscript{\scriptsize $\pm$0.2163}
& 0.1531\textsuperscript{\scriptsize $\pm$0.2809}
& -0.4140\textsuperscript{\scriptsize $\pm$0.1662}
& 0.1456\textsuperscript{\scriptsize $\pm$0.0278} \\

& CRPS~$(\downarrow)$ 
& 0.0180\textsuperscript{\scriptsize $\pm$0.0099}
& 0.0945\textsuperscript{\scriptsize $\pm$0.0095}
& 0.1032\textsuperscript{\scriptsize $\pm$0.0008}
& \textbf{0.0494\textsuperscript{\scriptsize $\pm$0.0050}}
& 0.5005\textsuperscript{\scriptsize $\pm$0.1006}
& 0.0476\textsuperscript{\scriptsize $\pm$0.0081}
& 0.0376\textsuperscript{\scriptsize $\pm$0.0139}
& \textbf{0.0217\textsuperscript{\scriptsize $\pm$0.0027}}
& 0.0817\textsuperscript{\scriptsize $\pm$0.0076}
& 0.0809\textsuperscript{\scriptsize $\pm$0.0278} \\

& ECE~$(\downarrow)$ 
& 0.4285\textsuperscript{\scriptsize $\pm$0.1073}
& 0.4012\textsuperscript{\scriptsize $\pm$0.0388}
& 0.5025\textsuperscript{\scriptsize $\pm$0.0055}
& 0.2717\textsuperscript{\scriptsize $\pm$0.0648}
& 0.7919\textsuperscript{\scriptsize $\pm$0.0632}
& 0.5832\textsuperscript{\scriptsize $\pm$0.1622}
& 0.5451\textsuperscript{\scriptsize $\pm$0.2481}
& 0.5916\textsuperscript{\scriptsize $\pm$0.1966}
& 0.2015\textsuperscript{\scriptsize $\pm$0.0339}
& 0.5983\textsuperscript{\scriptsize $\pm$0.1189} \\

\midrule

\multirow{4}{*}{\rotatebox{45}{\small \textbf{MCD}}} 
& MSE~$(\downarrow)$ 
& 0.0094\textsuperscript{\scriptsize $\pm$0.0084}
& 0.0418\textsuperscript{\scriptsize $\pm$0.0058}
& \textbf{0.0327\textsuperscript{\scriptsize $\pm$0.0005}}
& 0.0070\textsuperscript{\scriptsize $\pm$0.0040}
& 0.6464\textsuperscript{\scriptsize $\pm$0.1363}
& 0.0145\textsuperscript{\scriptsize $\pm$0.0087}
& 0.0670\textsuperscript{\scriptsize $\pm$0.0276}
& 0.0082\textsuperscript{\scriptsize $\pm$0.0070}
& 0.0252\textsuperscript{\scriptsize $\pm$0.0068}
& \textbf{0.0102\textsuperscript{\scriptsize $\pm$0.0034}} \\

& NLL~$(\downarrow)$ 
& 1.7845\textsuperscript{\scriptsize $\pm$0.1469}
& \textbf{-0.2618\textsuperscript{\scriptsize $\pm$0.1704}}
& -0.3889\textsuperscript{\scriptsize $\pm$0.0598}
& -0.7977\textsuperscript{\scriptsize $\pm$0.1874}
& 6.7464\textsuperscript{\scriptsize $\pm$6.8760}
& 0.0394\textsuperscript{\scriptsize $\pm$0.0028}
& -0.6656\textsuperscript{\scriptsize $\pm$0.1664}
& 0.5466\textsuperscript{\scriptsize $\pm$0.2362}
& -0.4053\textsuperscript{\scriptsize $\pm$0.1577}
& 0.4916\textsuperscript{\scriptsize $\pm$0.5748} \\

& CRPS~$(\downarrow)$ 
& 0.0616\textsuperscript{\scriptsize $\pm$0.0090}
& 0.1168\textsuperscript{\scriptsize $\pm$0.0073}
& 0.1033\textsuperscript{\scriptsize $\pm$0.0008}
& 0.0525\textsuperscript{\scriptsize $\pm$0.0096}
& 0.5242\textsuperscript{\scriptsize $\pm$0.0746}
& 0.0458\textsuperscript{\scriptsize $\pm$0.0131}
& 0.0433\textsuperscript{\scriptsize $\pm$0.0094}
& 0.0277\textsuperscript{\scriptsize $\pm$0.0090}
& 0.0820\textsuperscript{\scriptsize $\pm$0.0074}
& \textbf{0.0354\textsuperscript{\scriptsize $\pm$0.0063}}\\

& ECE~$(\downarrow)$ 
& 0.1977\textsuperscript{\scriptsize $\pm$0.0417}
& 0.3613\textsuperscript{\scriptsize $\pm$0.0345}
& 0.5056\textsuperscript{\scriptsize $\pm$0.0071}
& 0.2810\textsuperscript{\scriptsize $\pm$0.0727}
& 0.8318\textsuperscript{\scriptsize $\pm$0.0620}
& 0.3476\textsuperscript{\scriptsize $\pm$0.0597}
& 0.4348\textsuperscript{\scriptsize $\pm$0.1056}
& 0.4489\textsuperscript{\scriptsize $\pm$0.1319}
& 0.2013\textsuperscript{\scriptsize $\pm$0.0315}
& 0.4249\textsuperscript{\scriptsize $\pm$0.0797}\\
\midrule

\multirow{4}{*}{\rotatebox{45}{\small \textbf{DER}}} 
& MSE~$(\downarrow)$ 
& 0.1694\textsuperscript{\scriptsize $\pm$0.0033}
& 0.0310\textsuperscript{\scriptsize $\pm$0.0059}
& 0.0328\textsuperscript{\scriptsize $\pm$0.0006}
& 0.0154\textsuperscript{\scriptsize $\pm$0.0028}
& 1.0938\textsuperscript{\scriptsize $\pm$0.0177}
& 0.0331\textsuperscript{\scriptsize $\pm$0.0004}
& \textbf{0.0252\textsuperscript{\scriptsize $\pm$0.0001}}
& 0.0296\textsuperscript{\scriptsize $\pm$0.0001}
& 0.0253\textsuperscript{\scriptsize $\pm$0.0068}
& 0.0913\textsuperscript{\scriptsize $\pm$0.0062} \\

& NLL~$(\downarrow)$ 
& -2.5026\textsuperscript{\scriptsize $\pm$0.1637}
& -0.2425\textsuperscript{\scriptsize $\pm$0.1352}
& -0.3878\textsuperscript{\scriptsize $\pm$0.0123}
& -0.9593\textsuperscript{\scriptsize $\pm$0.2032}
& 3.1940\textsuperscript{\scriptsize $\pm$0.1607}
& \textbf{-2.2316\textsuperscript{\scriptsize $\pm$0.5052}}
& \textbf{-2.1217\textsuperscript{\scriptsize $\pm$1.3504}}
& \textbf{-2.2585\textsuperscript{\scriptsize $\pm$1.0695}}
& \textbf{-0.6008\textsuperscript{\scriptsize $\pm$0.0965}}
&-1.7055\textsuperscript{\scriptsize $\pm$1.7941} \\

& CRPS~$(\downarrow)$ 
& 0.9803\textsuperscript{\scriptsize $\pm$0.05754}
& 0.2111\textsuperscript{\scriptsize $\pm$0.0458}
& 0.8011\textsuperscript{\scriptsize $\pm$0.1905}
& 0.9438\textsuperscript{\scriptsize $\pm$0.3740}
& 0.9357\textsuperscript{\scriptsize $\pm$0.2642}
& 0.1099\textsuperscript{\scriptsize $\pm$0.0055}
& 0.6716\textsuperscript{\scriptsize $\pm$0.0978}
& 0.0814\textsuperscript{\scriptsize $\pm$0.0070}
& 0.7994\textsuperscript{\scriptsize $\pm$0.0004}
& 0.7217\textsuperscript{\scriptsize $\pm$0.0319} \\

& ECE~$(\downarrow)$ 
& \textbf{0.0431\textsuperscript{\scriptsize $\pm$0.0443}}
& 0.1208\textsuperscript{\scriptsize $\pm$0.0296}
& \textbf{0.0001\textsuperscript{\scriptsize $\pm$0.0000}}
& 0.0012\textsuperscript{\scriptsize $\pm$0.0003}
& \textbf{0.0113\textsuperscript{\scriptsize $\pm$0.0030}}
& 0.8621\textsuperscript{\scriptsize $\pm$0.0179}
& 0.1484\textsuperscript{\scriptsize $\pm$0.1436}
& 0.3441\textsuperscript{\scriptsize $\pm$0.1347}
& \textbf{0.0125\textsuperscript{\scriptsize $\pm$0.0024}}
& 0.0582\textsuperscript{\scriptsize $\pm$0.1119} \\
\midrule

\multirow{4}{*}{\rotatebox{45}{\small \textbf{SDE-Net}}} 
& MSE~$(\downarrow)$ 
& 0.0242\textsuperscript{\scriptsize $\pm$0.0190}
& 0.0348\textsuperscript{\scriptsize $\pm$0.0071}
& 0.0328\textsuperscript{\scriptsize $\pm$0.0005}
& 0.0199\textsuperscript{\scriptsize $\pm$0.0035}
& 1.6981\textsuperscript{\scriptsize $\pm$0.2651}
& 0.0068\textsuperscript{\scriptsize $\pm$0.0023}
& 0.0239\textsuperscript{\scriptsize $\pm$0.0032}
& 0.0170\textsuperscript{\scriptsize $\pm$0.0017}
& 0.0255\textsuperscript{\scriptsize $\pm$0.0071}
& 0.0550\textsuperscript{\scriptsize $\pm$0.0213} \\

& NLL~$(\downarrow)$ 
& -2.0640\textsuperscript{\scriptsize $\pm$0.2909}
& -0.1729\textsuperscript{\scriptsize $\pm$0.1785}
& -0.3822\textsuperscript{\scriptsize $\pm$0.0578}
& -1.5697\textsuperscript{\scriptsize $\pm$0.2236}
& -1.1297\textsuperscript{\scriptsize $\pm$0.48276}
& 0.9260\textsuperscript{\scriptsize $\pm$0.1421}
& -1.1261\textsuperscript{\scriptsize $\pm$0.1385}
& -1.9123\textsuperscript{\scriptsize $\pm$0.2048}
& -0.3875\textsuperscript{\scriptsize $\pm$0.1544}
& -1.7786\textsuperscript{\scriptsize $\pm$0.7054} \\

& CRPS~$(\downarrow)$ 
& 0.8119\textsuperscript{\scriptsize $\pm$0.0185}
& 0.4041\textsuperscript{\scriptsize $\pm$0.0020}
& 0.4048\textsuperscript{\scriptsize $\pm$0.0020}
& 0.7437\textsuperscript{\scriptsize $\pm$0.0129}
& 0.8338\textsuperscript{\scriptsize $\pm$0.0743}
& 0.9017\textsuperscript{\scriptsize $\pm$0.0803}
& 0.8534\textsuperscript{\scriptsize $\pm$0.0731}
& 0.9478\textsuperscript{\scriptsize $\pm$0.0880}
& 0.3771\textsuperscript{\scriptsize $\pm$0.0143}
& 0.9114\textsuperscript{\scriptsize $\pm$0.0745} \\

& ECE~$(\downarrow)$ 
& 0.0475\textsuperscript{\scriptsize $\pm$0.0340}
& \textbf{0.0632\textsuperscript{\scriptsize $\pm$0.0086}}
& 0.0673\textsuperscript{\scriptsize $\pm$0.0007}
& \textbf{0.0285\textsuperscript{\scriptsize $\pm$0.0038}}
& 0.5171\textsuperscript{\scriptsize $\pm$0.0396}
& 0.0129\textsuperscript{\scriptsize $\pm$0.0066}
& 0.1591\textsuperscript{\scriptsize $\pm$0.0034}
& 0.0204\textsuperscript{\scriptsize $\pm$0.0034}
& 0.0424\textsuperscript{\scriptsize $\pm$0.0119}
& 0.0803\textsuperscript{\scriptsize $\pm$0.0326} \\

\bottomrule

\end{tabular}}
\label{tab:results}
\begin{minipage}{\textwidth}
\footnotesize
\textbf{Note:} (↑) higher is better; (↓) lower is better.
\vspace{-5mm}
\end{minipage}
\end{table*}

\begin{figure*}[ht!]
\centering
\includegraphics[width=1.0\textwidth]{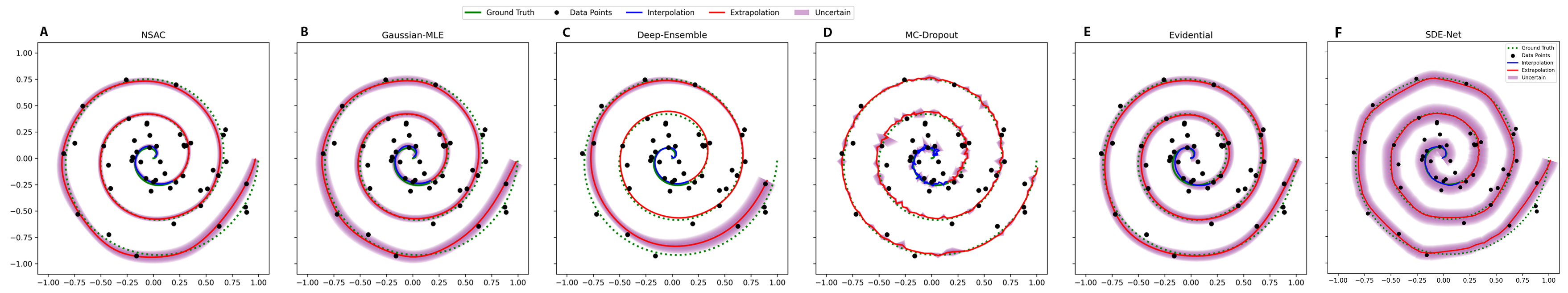}
\caption{Spiral trajectory with uncertainty estimates: \textbf{(A)} NSAC; \textbf{(B)} GMLE; \textbf{(C)} DE; \textbf{(D)} MCD; \textbf{(E)} DER; and \textbf{(F)} SDE-Net. NSAC and DER provide more compact uncertainty tubes with smooth mean accuracy.} 
\vspace{-3mm}
\label{fig:spiral_baseline}
\end{figure*}

\subsection{Irregular CT Function Approximation}
In the first experiment, we evaluate NSAC’s ability to approximate irregular CT function with uncertainty estimates. We curated the irregular spiral dataset following the procedure in~\cite{chen2023contiformer}, which generated 300 2D spiral trajectories, each sampled at 150 equally spaced time points. To introduce irregularity, we randomly selected 50 points from each trajectory without replacement. We then visualize the resulting interpolation and extrapolation. The qualitative results with uncertainty estimates are shown in Figure~\ref{fig:spiral_baseline} and quantitative results are reported in Table~\ref{tab:results}. \\
NSAC achieved the strongest predictive performance, with the lowest MSE of 0.0002, followed by DE~(0.0071) and MCD~(0.0094). It also achieved the best CRPS of 0.0095, outperforming DE~(0.0180), MCD~(0.0616), and SDE-Net~(0.8119), confirming its superior accuracy and distributional sharpness. In NLL, DE ranked first with -2.5632, followed closely by NSAC~(-2.3483). The DER exhibited severe distributional failure, with the CRPS degrading to 0.9801 despite a favorable NLL~(-2.5026). With respect to calibration, DER and SDE-Net achieved the lowest ECE (0.0431 and 0.0475, respectively), whereas NSAC achieved moderate calibration of 0.1631.\\
\textbf{Ablation Decomposition.} We also perform an ablation study on spirals to examine the effect of regularization on uncertainty decomposition. The qualitative visualization of the decomposition is presented in Figure~\ref{fig:spiral_ablation}, and the quantitative metrics (MSE \& MAE) are presented in Table~\ref{tab:ablation_results}. This confirms the functional role of $\mathcal{L_{\text{reg}}}$. With $\mathcal{L_{\text{reg}}}$, the NSAC successfully decomposes uncertainty into distinct aleatoric and epistemic components. Without $\mathcal{L_{\text{reg}}}$, epistemic uncertainty is suppressed while aleatoric uncertainty is retained, and predictive accuracy is preserved with near-identical MSEs and MAEs across interpolation (MSE: 0.0002, MAE: 0.0143 vs 0.0117) and extrapolation (MSE: 0.0017 vs. 0.0012, MAE: 0.0304 vs. 0.0272) regimes. 
\begin{figure}[t]
\centering
\includegraphics[width=0.98\columnwidth]{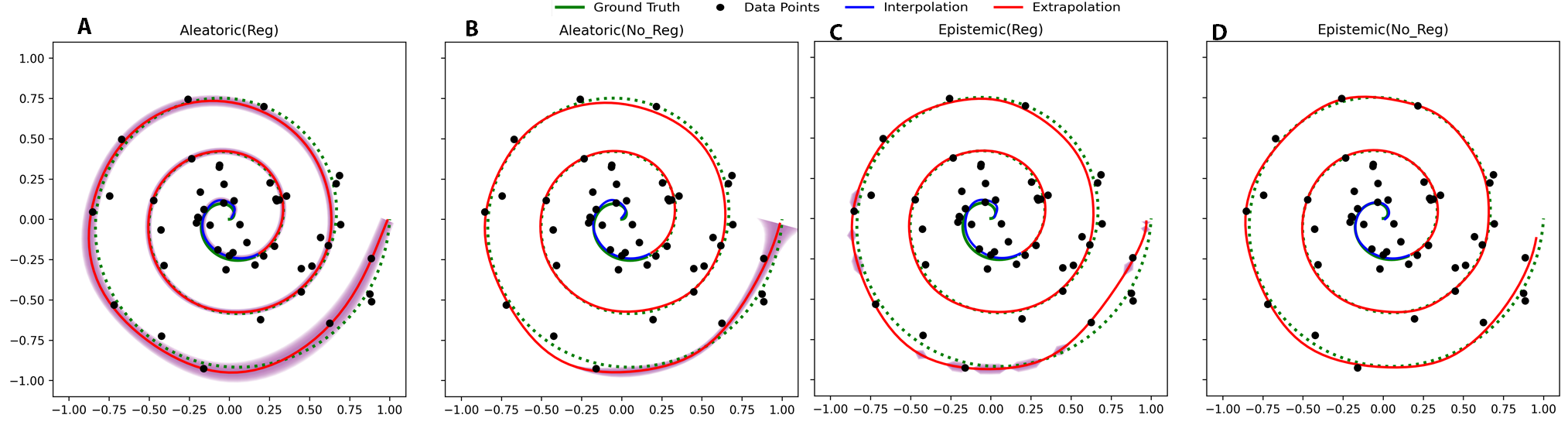}
\caption{Ablation Decomposition of NSAC uncertainty estimates: \textbf{(A)} Aleatoric uncertainty with regularizer; \textbf{(B)} Aleatoric uncertainty without regularizer; \textbf{(C)} Epistemic uncertainty with regularizer; \textbf{(D)} Epistemic uncertainty without regularizer.}
\label{fig:spiral_ablation}
\end{figure}

\begin{table}[t]
\centering
\caption{Ablation decomposition results}
\resizebox{0.85\columnwidth}{!}{%
\begin{tabular}{lcccc}
\toprule
\multirow{2}{*}{Metric} & \multicolumn{2}{c}{Aleatoric} & \multicolumn{2}{c}{Epistemic} \\
\cmidrule(lr){2-3}  \cmidrule(lr){4-5}
       & $\mathcal{L}_{\text{reg}}$ & W/O $\mathcal{L}_{\text{reg}}$ & $\mathcal{L}_{\text{reg}}$ & W/O $\mathcal{L}_{\text{reg}}$ \\
\midrule
\multicolumn{5}{c}{Task = \textit{Interpolation}} \\
\midrule
MSE ($\downarrow$) & 0.0002 & 0.0002 & 0.0002 & 0.0002 \\
MAE ($\downarrow$) & 0.0143 & 0.0117 & 0.0119 & 0.0058 \\
\midrule
\multicolumn{5}{c}{Task = \textit{Extrapolation}} \\
\midrule
MSE ($\downarrow$) & 0.0017 & 0.0012 & 0.0018 & 0.0012 \\
MAE ($\downarrow$) & 0.0304 & 0.0272 & 0.0300 & 0.0237 \\
\bottomrule
\end{tabular}}
\vspace{-5mm}
\label{tab:ablation_results}
\end{table}

\subsection{Multivariate Regression}
In the second experiment, we evaluate all the methods on a multivariate regression task. Specifically, we utilized two well-known regression benchmark datasets: (i) Boston Housing; and (ii) Kin8nm. Prior to training, both datasets were normalized using \texttt{MinmaxScalar}.\\
\textbf{Boston:} The Boston Housing dataset has 506 samples of homes in Boston suburbs, each with 13 features, such as crime rate, average rooms, and accessibility to highways, with the target being the median home value. NSAC achieves the lowest MSE of 0.030, outperforming DE~(0.0303) and the GMLE~(0.0304), while improving over MCD~(0.0418) and SDE-Net~(0.0348). The DER attains a competitive MSE but exhibits severe distributional distortion, with the CRPS increasing to 0.211, which is double those of the NSAC~(0.0940) and DE~(0.0945). NLL values are tightly clustered for NSAC~(-0.2665), DE~(-0.2633), and MCD~(-0.2618). In terms of calibration, SDE-Net achieves the lowest ECE of 0.0632, followed by DER~(0.1208), while NSAC~(0.3974) is comparable to DE~(0.4012).\\
\textbf{Kin8nm:} The Kin8nm dataset contains a total of 8192 samples, where each sample contains 8 numerical input features representing physical measurements and a single continuous target variable. In this test, the prediction accuracy is nearly saturated, with all methods achieving MSEs between 0.0327 and 0.0328, making the distributional quality the primary differentiator. NLL is also effectively indistinguishable for NSAC~(-0.3870), GMLE~(-0.3893), DE~(-0.3905), MCD~(-0.3889), and DER~(-0.3878). NSAC achieves a CRPS of 0.1029, matching those of the GMLE, DE, and MCD, whereas DER and SDE-Net substantially worsens despite having comparable MSE. The near-zero ECE of the DER likely reflects evidential overfitting rather than genuine calibration, which is consistent with its elevated CRPS variance. NSAC has an ECE of 0.3085.
\begin{figure}[t]
\centering
\includegraphics[width=1.0\columnwidth]{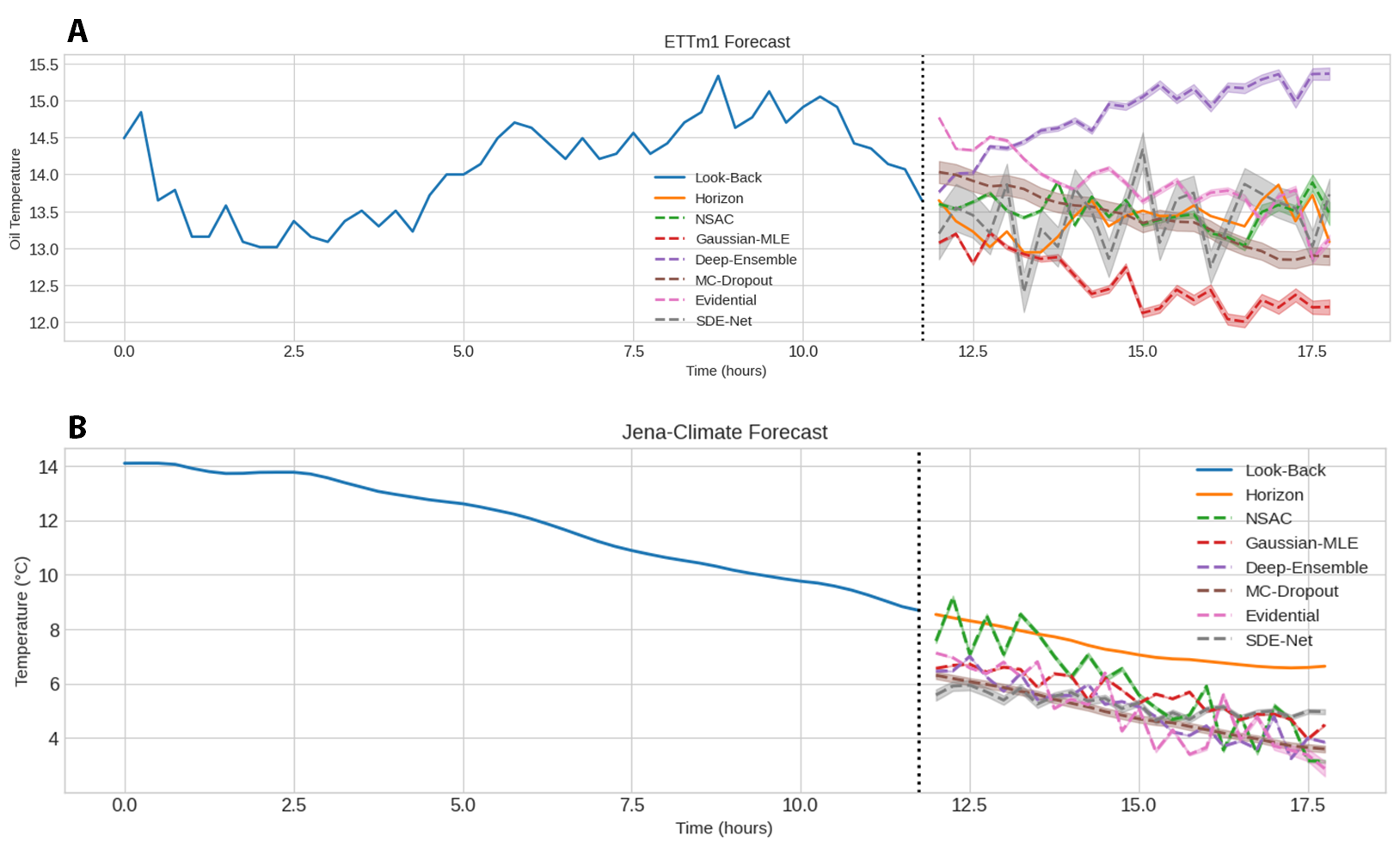}
\caption{Visualization of forecast projections with uncertainty estimates: (\textbf{A}) ETTm1; and (\textbf{B}) Jena-Climate.}
\label{fig:lrm_exps}
\vspace{-5mm}
\end{figure}
\subsection{Multivariate Long-range Forecasting}
In the third experiment, we evaluate the multivariate long-range forecasting capability of NSAC. Two widely used benchmark datasets are utilized: (i) ETTm1; and (ii) Jena-Climate. Prior to training, both datasets were transformed using \texttt{MinMaxScaler}, and predictions were inverse-transformed back for post-training evaluation. Each dataset was split into training (80\%), validation (10\%), and test (10\%) folds. Figure~\ref{fig:lrm_exps} illustrates the forecasting projections with uncertainty tubes.\\
\textbf{ETTm1:} The ETTm1 dataset consists of 7 time-dependent features sampled at 15-minute intervals. For forecasting, the model was conditioned on the preceding 12 hours (48 look-back) to predict the subsequent 6 hours (24 horizons). Post-hoc baselines achieve stronger predictive accuracy, with DE attaining the lowest MSE of 0.0059, followed by MCD (0.0070), GMLE (0.0078), and NSAC (0.0099). However, the NSAC achieves the best NLL of -1.3816, outperforming all the baselines. GMLE and DE attain the lowest CRPS values of 0.0755 and 0.0494, respectively, while NSAC achieves 0.1037. Similar to Kin8nm test, the DER’s near-zero ECE of 0.0012$\pm$0.0003 likely reflects evidential overfitting, whereas the NSAC records an ECE of 0.3970 . Figure~\ref{fig:lrm_exps}(A) indicates that NSAC forecasting trajectories align more closely with the ground truth.\\
\textbf{Jena-Climate:} The Jena-Climate dataset contains 14 atmospheric variables recorded in Jena, Germany, at 10-minute intervals. The model was trained using the past 8 hours (48 look-back ) to forecast the following 4 hours (24 horizon). NSAC achieves the lowest MSE of 0.1675, substantially outperforming MCD~(0.6464), DE~(0.6881), DER~(1.0938), and SDE-Net~(1.6981). It also achieves the most favorable NLL of -1.9114 and the lowest CRPS of 0.1422, while all the baselines exhibit severely degraded density estimation. Figure~\ref{fig:lrm_exps}(B) shows that NSAC produces more accurate forecasting trajectories than competing methods.
\begin{figure*}[ht!]
\centering
\includegraphics[width=0.85\textwidth]{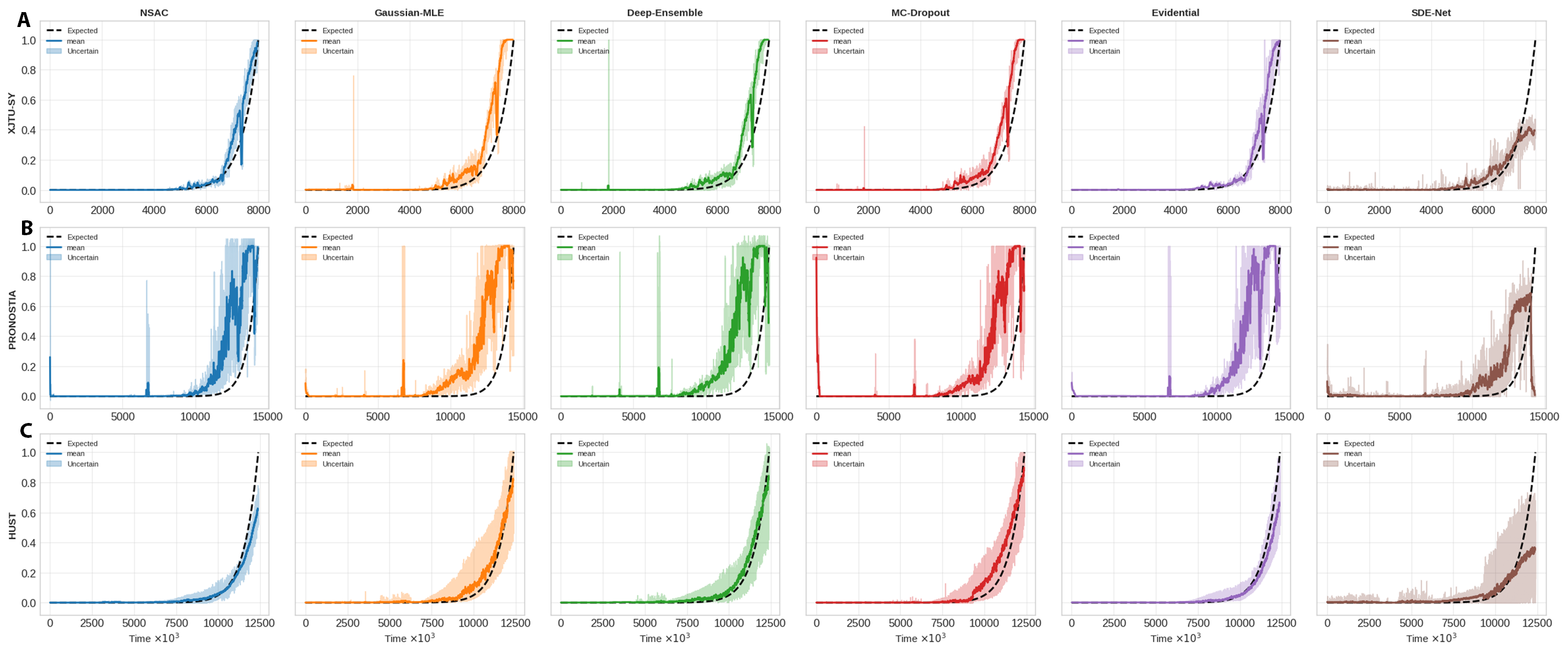}
\caption{Visualization of degradation trajectory with uncertainty estimates. (\textbf{A}) XJTU-SY; (\textbf{B}) PRONOSTIA; and (\textbf{C}) HUST.} 
\vspace{-4mm}
\label{fig:rul}
\end{figure*}
\subsection{Industry~4.0}
In this fourth experiment, we evaluate the ability of the NSAC to model physical dynamics through the task of bearing degradation estimation. Bearing degradation prediction is a classical problem in industrial engineering and is typically addressed using data-driven approaches. However, such methods often suffer from limited generalizability beyond their training regimes, while their large model sizes make them impractical for deployment in resource-constrained environments. The objective of this experiment is therefore to learn degradation dynamics from a single operating condition and assess zero-shot generalization to entirely unseen datasets. It also aims to assess the distributional generalization of uncertainty estimates and prediction accuracy across datasets.\\
We conduct experiments on three widely used benchmark datasets: (i) XJTU-SY~\cite{wang2018xjtu}, (ii) PRONOSTIA~\cite{nectoux2012pronostia}, and (iii) HUST~\cite{thuan2023hust}. Training is performed using the first three bearing from the XJTU-SY dataset, while evaluation is conducted in a zero-shot manner on the remaining datasets. Prior to training, the raw vibration signals are transformed into physically meaningful features following \cite{razzaq2025carle}, and nonlinear degradation labels are generated using the procedure described in \cite{razzaq2025developingdistanceawareuncertaintyquantification}. This formulation reduces the problem to a long-horizon regression task. Figure~\ref{fig:rul} presents the qualitative results.\\
\textbf{XJTU-SY (in-distribution):} NSAC achieves the lowest MSE of 0.0048, outperforming SDE-Net~(0.0068), MCD~(0.0145), DE~(0.0139), GMLE~(0.0221), and DER~(0.0331). Although the DER attains the most favorable NLL of -2.2316, the NSAC achieves a competitive value of -0.0219, surpassing those of the GMLE, DE, MCD, and SDE-Net. NSAC further records the lowest CRPS of 0.0261 and an ECE of 0.1404. Figure~\ref{fig:rul}(A) confirms NSAC's superior probabilistic accuracy and calibration relative to those of all the baselines on XJTU-SY.\\
\textbf{PRONOSTIA (out-of-distribution):} Under a distributional shift, the NSAC achieves the lowest CRPS~(0.0310) and ECE~(0.1337), demonstrating strong generalizability in terms of probabilistic estimation. Although DER achieves the lowest MSE~(0.0252) and NLL~(-2.1217), its substantially higher CRPS~(0.6716) suggests significant distributional distortion and poor uncertainty quality. Figure~\ref{fig:rul}(B) illustrates the degradation trajectory with uncertainty tubes for PRONOSTIA.\\
\textbf{HUST (out-of-distribution):} NSAC achieves the lowest MSE~(0.0033) and ECE~(0.0841). DE records the lowest CRPS~(0.0217), marginally outperforming NSAC~(0.0231). DER again achieves the most favorable NLL~(-2.2585), followed by NSAC~(-1.5889). Figure~\ref{fig:rul}(C) illustrates the degradation trajectory with uncertainty tubes for HUST.

\subsection{Autonomous Vehicle}
In the fifth experiment, we evaluate the ability of NSAC to function as a step-wise controller for autonomous vehicles. To perform this evaluation, we utilize two widely used autonomous driving simulation environments: (i) the Udacity Self-Driving Car simulator and (ii) OpenAI CarRacing. The objective is to develop an end-to-end controller that takes an input image stream and predicts steering actions based on the observed scene. Specifically, the task requires learning the relationship between the road horizon and the corresponding steering commands.\\
\textbf{Udacity Simulator:} For the Udacity simulator, the training dataset is generated by manually driving the vehicle for 30 minutes while the front-mounted camera input and corresponding steering angles are recorded. This produces a dataset of 15647 RGB images of size \(320\times160\times3\) with corresponding steering angles. We adopt the preprocessing pipeline and architecture introduced in \cite{shibuya_car_behavioral_cloning}, replacing the latent dense layers with NSAC or baseline. The quantitative results are summarized in Table~\ref{tab:results}. All the models converge to nearly identical predictive accuracies, with MSE values ranging from 0.0249 to 0.0255. DER achieves the lowest NLL of -0.6008, followed by DE~(-0.4140), the GMLE~(-0.4129), and the NSAC~(-0.4122). NSAC attains the lowest CRPS of 0.0814, marginally outperforming the GMLE~(0.0816) and DE~(0.0817) and substantially surpassing the DER~(0.7994). DER achieves the lowest ECE of 0.0125, followed by SDE-Net~(0.0424), while NSAC records an ECE of 0.1982.\\
\textbf{OpenAI-CarRacing:} For OpenAI CarRacing, the training dataset is generated using a PPO agent trained for two million steps to autonomously drive the vehicle while recording observations and corresponding control actions, including the steering angle, throttle, and brake. This results in a dataset of 48102 grayscale images of size $84 \times 84$ paired with control actions. We develop an end-to-end neural architecture consisting of convolutional layers for spatial feature extraction followed by NSAC or baseline methods for action prediction in a regression setting. The quantitative results are reported in Table~\ref{tab:results}. NSAC achieves the most favorable NLL of -1.6450 and a competitive ECE of 0.0763, with an MSE of 0.0154. The MCD attains the lowest MSE of 0.0102 and the lowest CRPS of 0.0354. A detailed closed-loop analysis with trajectory analysis is presented in Appendix~\ref{appendix:closed-loop}.

\begin{figure*}[t]
\centering
\includegraphics[width=0.85\textwidth]{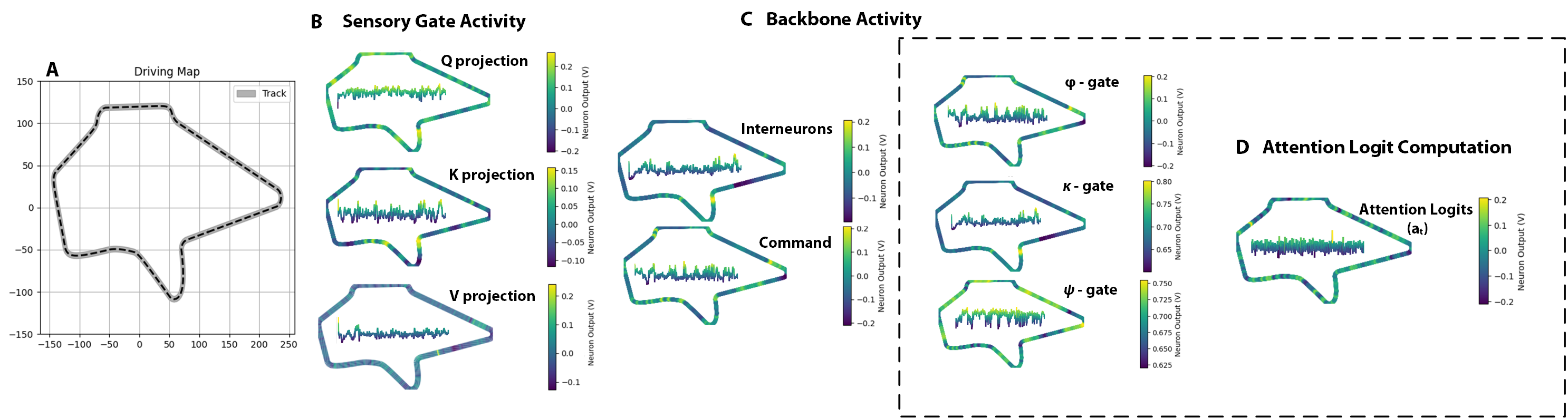}
\caption{Intuitive visualization of NSAC cell activity during driving. (\textbf{A}) Test drive map; (\textbf{B}) Sensory neurons output (Centroid) for $q$, $k$, $v$ projections; (\textbf{C}) Backbone activity with interneurons (Centroid), Command neurons (Centroid) and OU coefficients ($\kappa$, $\phi$, and $\psi$) outputs (\textbf{D}) Attention logit ($a_t$) computation. Internal plots are actual neuron activities.}\vspace{-3mm}
\label{fig:interpretability}
\end{figure*}

\subsection{NSAC Carries Interpretability}
Interpretability refers to explaining AI decisions in a human-understandable manner through the analysis of cell-level potentials. We conduct a cell-level analysis of the NSAC on the OpenAI-CarRacing environment using a the test track shown in Figure~\ref{fig:interpretability}(A). Specifically, we trace sensory projections ($q$, $k$, $v$) through the backbone, from inter-command neurons to the output heads $\kappa$, $\phi$, and $\psi$ and record cell potentials at each point along the test track. The output head $\phi$ varies within [-0.2, 0.2], with values below -0.10 associated with the left steering turn, values above 0.10 associated with the right steering turn, and intermediate values corresponding to straight. Similarly, $\kappa$ remains within [0.65, 0.8], where values below 0.7 indicate left turns and values above 0.75 indicate right turns. The head $\psi$ lies within [0.625, 0.750], with values below 0.685 corresponding to left turns and values above 0.715 corresponding to right turns. The attention logits vary within [-0.2, 0.2], where values above 0.1 indicate right steering, values below -0.1 indicate left steering, and intermediate values correspond to straight driving.

\section{Discussion \& Limitations}\label{sec:discuss}
NSAC frequently achieved favorable NLL and CRPS values across several benchmarks, suggesting that stochastic attention can improve aspects of probabilistic prediction beyond point-estimation performance alone. The moderate ECE observed in some settings is consistent with the stochastic inductive bias of the OU formulation, where locally sharper uncertainty estimates do not always translate into exact marginal coverage. The empirical results indicate that gains in probabilistic metrics do not always coincide with improvements in calibration, highlighting an inherent trade-off between distributional sharpness and calibration quality. DER further shows evidential collapse, with near-zero ECE but degraded CRPS on spiral and ETTm1, indicating overconfident predictions. \\
NSAC relies on Gaussian perturbations to generate surrogate OOD samples as a scalable, domain-agnostic mechanism to induce separation, which may not faithfully reflect semantic distribution shifts. Exploring alternatives such as adversarial or domain-specific, data-driven perturbations could be a promising direction for future work.\\
NSAC provides a model-integrated approach for incorporating uncertainty in CT attention architectures, which is particularly relevant in settings that require temporal continuity and tight architectural integration. NSAC is not intended as a replacement for established uncertainty quantification methods, but rather as a complementary mechanism that enables uncertainty to be modeled directly within the architecture without introducing instability.

\section{Conclusion}\label{sec:conclude}
In this work, we introduced Neuronal Stochastic Attention Circuit (NSAC), a novel biologically inspired continuous-time (CT) stochastic attention architecture that reformulates attention logits as the solution of an Ornstein-Uhlenbeck SDE (OU-SDE) modulated by nonlinear, input-dependent, interlinked gates derived from the \textit{C.~elegans} Neuronal Circuit Policies (NCPs) wiring mechanism. NSAC admits a closed-form forward pass under frozen-coefficient assumption, avoiding expensive numerical SDE solvers while enabling principled stochasticity within the attention logits. Logistic-normal normalization propagates this stochasticity to the attention weights, allowing the computation of attention scores and heteroscedastic probabilistic outputs. A two-term objective function combining Gaussian negative log-likelihood estimation with an epistemic-separation regularizer enables NSAC to jointly learn aleatoric and epistemic uncertainty estimates. Theoretically, we presented rigorous stability bounds and finite-time guarantees for both the SDE formulation and its closed-form approximation. Empirically, we evaluated NSAC across a diverse set of learning tasks and compared it with several state-of-the-art UQ baselines. Across all tasks, NSAC achieved competitive predictive performance and frequently attained favorable probabilistic metrics with informative uncertainty estimates while being interpretable at the neuronal-cell level.


\section*{Reproducibility Statement}
The code for reproducibility is available at  \url{https://github.com/itxwaleedrazzaq/neuronal_stochastic_attention_circuit}.






\bibliographystyle{plainnat}
\bibliography{references}

\appendix
\section*{Appendix}

\section{Proofs}
In this section we provide all proofs.
\subsection{Analyzing Closed-form Solution}

\subsubsection{Derivation of Closed-form Solution}\label{appendix:closed-form}
We derive the closed-form solution of NSAC. Throughout this derivation, the input-dependent gate parameters $\kappa(u)$, $\phi(u)$, and $\psi(u)$ are treated as piecewise constant within each discrete update interval (locally frozen coefficients~\cite{john1952integration}). Under this assumption, the system reduces to a linear SDE with constant coefficients on each interval, admitting an exact analytical solution. Let the attention logit $a_t$ evolve according to
\begin{equation}
da_t = \kappa(\phi - a_t)\,dt + \psi\,dW_t,
\end{equation}
Rewriting it in standard linear form,
\begin{equation}
da_t + \kappa a_t\,dt = \kappa \phi\,dt + \psi\,dW_t.
\end{equation}
Multiplying both sides by the integrating factor $e^{\kappa t}$ and applying It\^o's product rule yields
\begin{equation}
d\!\left(e^{\kappa t} a_t\right)
= \kappa \phi e^{\kappa t}\,dt + \psi e^{\kappa t}\,dW_t.
\end{equation}
Integrating from $0$ to $t$,
\begin{equation}
e^{\kappa t} a_t - a_0
= \kappa \phi \int_0^t e^{\kappa s}\,ds
+ \psi \int_0^t e^{\kappa s}\,dW_s. 
\end{equation}
Evaluating the deterministic integral:
\begin{equation}
\kappa \phi \int_0^t e^{\kappa s}\,ds = \phi\left(e^{\kappa t} - 1\right),  
\end{equation}
and dividing through by $e^{\kappa t}$ gives the closed-form solution
\begin{equation}
a_t = \phi + (a_0 - \phi)e^{-\kappa t}
+ \psi \int_0^t e^{-\kappa (t-s)}\,dW_s. 
\end{equation}
Taking expectations and using $\mathbb{E}[dW_t]=0$,
\begin{equation}
\mathbb{E}[a_t] = \phi + (a_0 - \phi)e^{-\kappa t}.
\end{equation}
The stochastic integral is the sole source of randomness. Applying It\^o isometry,
\begin{equation}
\mathrm{Var}(a_t)
= \psi^2 \int_0^t e^{-2\kappa (t-s)}\,ds
= \frac{\psi^2}{2\kappa}\left(1 - e^{-2\kappa t}\right).
\end{equation}
Thus the variance increases monotonically from $0$ and saturates at
\begin{equation}
\lim_{t \to \infty} \mathrm{Var}(a_t) = \frac{\psi^2}{2\kappa}.
\end{equation}
The mean reverts exponentially from the initial value $a_0$ toward the long-term mean $\phi$ at rate $\kappa$, while the variance approaches a stationary limit determined by the noise scale $\psi$ and the reversion strength $\kappa$.

\subsubsection{Frozen Coefficients Error Analysis}\label{appendix:frozen_coefficient}
The frozen-coefficients approximation may induce small errors relative to time-varying coefficients; here we now analyze this.

\begin{theorem}[Frozen-Coefficient Approximation Error]\label{theorem:frozen}
Let $\mu_{\mathrm{varying}}(t)$ and $\sigma_{\mathrm{varying}}^2(t)$ denote the mean and variance under time-varying coefficients, and let the frozen-coefficient approximations be $\mu_{\mathrm{frozen}}(t) = \phi_0 + (a_0 - \phi_0)e^{-\kappa_0 t}$, and $\sigma_{\mathrm{frozen}}^2(t) = \frac{\psi_0^2}{2\kappa_0}(1 - e^{-2\kappa_0 t})$. Assume that $\kappa(t)$, $\phi(t)$, and $\psi(t)$ are Lipschitz continuous in $t$ with constants $L_\kappa$, $L_\phi$, and $L_\psi$, i.e.,
$|\kappa(t)-\kappa(s)| \leq L_\kappa|t-s|$, $|\phi(t)-\phi(s)| \leq L_\phi|t-s|$, $|\psi(t)-\psi(s)| \leq L_\psi|t-s|$. Define $C_\mu = |a_0-\phi_0|$ and $C_\sigma = \psi_0^2/(2\kappa_0)$ then for any $t\in[0,T]$:

\textit{(i)} Mean error: $\varepsilon_\mu(t) = \mu_{\mathrm{varying}}(t) - \mu_{\mathrm{frozen}}(t)$ satisfies
\begin{equation}
|\varepsilon_\mu(t)| \;\leq\; \frac{\kappa_0 L_\phi + L_\kappa C_\mu}{L_\kappa}\!\left(e^{L_\kappa t^2/2} - 1\right)
\label{eq:mean_frozen_expression}
\end{equation}
\textit{(ii)} Variance error: $\varepsilon_{\sigma^2}(t) = \sigma^2_{\mathrm{varying}}(t) - \sigma^2_{\mathrm{frozen}}(t)$ satisfies
\begin{equation}
|\varepsilon_{\sigma^2}(t)| \;\leq\; \frac{L_\kappa\psi_0^2/(2\kappa_0) + L_\psi\psi_0}{L_\kappa}\!\left(e^{L_\kappa t^2} - 1\right)
\label{eqn:frozen_expression}
\end{equation}
For small $L_\kappa t^2 \ll 1$, both bounds simplify to $\mathcal{O}(t^2)$. 
\end{theorem}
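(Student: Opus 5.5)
We compare the two moment ODEs directly, treating $\kappa_0=\kappa(0)$, $\phi_0=\phi(0)$, $\psi_0=\psi(0)$ as the frozen values. Taking expectations in the time-varying SDE $da_t=\kappa(t)(\phi(t)-a_t)\,dt+\psi(t)\,dW_t$ and applying It\^o's formula to $(a_t-\mu_t)^2$ gives the linear moment equations
\begin{equation*}
\dot\mu_{\mathrm{varying}}=\kappa(t)\bigl(\phi(t)-\mu_{\mathrm{varying}}\bigr),\qquad \dot\sigma^2_{\mathrm{varying}}=-2\kappa(t)\sigma^2_{\mathrm{varying}}+\psi(t)^2 .
\end{equation*}
The frozen quantities satisfy the same equations with constant coefficients, as in the derivation of Appendix~\ref{appendix:closed-form}. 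Both start from the same initial data, $\mu(0)=a_0$ and $\sigma^2(0)=0$, so $\varepsilon_\mu(0)=\varepsilon_{\sigma^2}(0)=0$.

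\textbf{Mean error.} Subtracting the two mean equations gives
\begin{equation*}
\dot\varepsilon_\mu=-\kappa(t)\varepsilon_\mu+\bigl[\kappa(t)\phi(t)-\kappa_0\phi_0\bigr]-\bigl[\kappa(t)-\kappa_0\bigr]\mu_{\mathrm{frozen}} .
\end{equation*}
We regroup the forcing as $\kappa_0(\phi(t)-\phi_0)+(\kappa(t)-\kappa_0)(\phi(t)-\mu_{\mathrm{frozen}})$. The Lipschitz assumptions bound the first term by $\kappa_0L_\phi t$. For the second term, $|\phi(t)-\mu_{\mathrm{frozen}}|\le |\phi_0-\mu_{\mathrm{frozen}}|+|\phi(t)-\phi_0|\le C_\mu+L_\phi t$, using Corollary~\ref{corollary:exponential}, which gives $|\mu_{\mathrm{frozen}}-\phi_0|\le C_\mu$. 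Absorbing the quadratic remainder, the forcing is bounded by $(\kappa_0L_\phi+L_\kappa C_\mu)\,t$ plus terms we fold into the Gr\"onwall factor.

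Since the statement allows $\kappa(t)$ to lose positivity away from $0$, we treat the term $-\kappa(t)\varepsilon_\mu$ crudely. Using $|\kappa(t)|\le\kappa_0+L_\kappa t$ and dropping the dissipative part, we bound it by $L_\kappa t\,|\varepsilon_\mu|$. This yields the differential inequality
\begin{equation*}
\tfrac{d}{dt}|\varepsilon_\mu|\le L_\kappa t\,|\varepsilon_\mu|+(\kappa_0L_\phi+L_\kappa C_\mu)\,t .
\end{equation*}
Gr\"onwall with integrating factor $e^{-L_\kappa t^2/2}$ gives $|\varepsilon_\mu|\le \frac{B}{L_\kappa}\bigl(e^{L_\kappa t^2/2}-1\bigr)$, since $\int_0^t Bs\,e^{L_\kappa(t^2-s^2)/2}\,ds$ is exactly this quantity. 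Here $B$ is the constant in the statement.

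\textbf{Variance error.} We repeat the argument. Subtracting the variance equations gives
\begin{equation*}
\dot\varepsilon_{\sigma^2}=-2\kappa(t)\varepsilon_{\sigma^2}-2(\kappa(t)-\kappa_0)\sigma^2_{\mathrm{frozen}}+(\psi(t)^2-\psi_0^2).
\end{equation*}
We bound the forcing terms as follows:
\begin{itemize}
\item $\sigma^2_{\mathrm{frozen}}\le C_\sigma$, by the variance-boundedness corollary, so the middle term is at most $2L_\kappa C_\sigma t$;
\item $|\psi^2-\psi_0^2|=|\psi-\psi_0|\,|\psi+\psi_0|\le 2L_\psi\psi_0t$ to leading order.
\end{itemize}
The forcing is therefore at most $2(L_\kappa C_\sigma+L_\psi\psi_0)\,t$, and the homogeneous coefficient is handled as before, now with rate $2L_\kappa t$. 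Gr\"onwall with factor $e^{-L_\kappa t^2}$ then gives the bound with $e^{L_\kappa t^2}-1$ and prefactor $(L_\kappa C_\sigma+L_\psi\psi_0)/L_\kappa$. Expanding $e^{x}-1\approx x$ for $L_\kappa t^2\ll1$ gives the $\mathcal{O}(t^2)$ statements.

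\textbf{Main obstacle.} The delicate step is controlling the homogeneous term $-\kappa(t)\varepsilon$ by $L_\kappa t\,|\varepsilon|$. This requires discarding the dissipative part $-\kappa_0\varepsilon$, which is justified only when $\kappa(t)\ge0$, for instance via the softplus gate; we would state that assumption explicitly. A second weak point is that the quadratic remainders ($L_\kappa L_\phi t^2$ and $L_\psi^2t^2$) are not literally present in the stated constants. We expect to absorb them either by restricting to the regime $t\le T$ with $L t\lesssim1$, or by bounding them inside the Gr\"onwall exponent, and we would check which convention makes the constants match exactly.
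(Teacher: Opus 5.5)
\textbf{There is a genuine gap, and it is exactly the step you deferred.} Your route is essentially the paper's. You subtract the moment ODEs, split the forcing into $\kappa_0\,\delta\phi+\delta\kappa\,(\phi-\mu_{\mathrm{frozen}})$ (resp.\ $-2\delta\kappa\,\sigma^2+\delta(\psi^2)$), bound it with the Lipschitz constants, and close with the factors $e^{-L_\kappa t^2/2}$ and $e^{-L_\kappa t^2}$.

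The remainders $L_\kappa L_\phi s^2$ and $L_\psi^2 s^2$ cannot be absorbed into the stated constants under any convention, because the stated inequalities are false. These remainders are forcing terms that do not multiply $\varepsilon$, so they cannot be moved into the Gr\"onwall exponent. The claimed bound (i) expands as $\tfrac12 Bt^2+\tfrac18 BL_\kappa t^4+\cdots$ with no $t^3$ term, while the dropped remainder puts a genuine $t^3$ term into the error. So when the $t^2$ coefficients match, restricting to small $t$ makes things worse, not better.

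Concretely, take $\kappa(t)=1+t$, $\phi(t)=t$, $a_0=0$, so $\kappa_0=L_\kappa=L_\phi=1$, $C_\mu=0$ and $\mu_{\mathrm{frozen}}\equiv0$. Then
\[
\mu_{\mathrm{varying}}(t)=t-e^{-(t+t^2/2)}\int_0^t e^{s+s^2/2}\,ds=\tfrac12t^2+\tfrac16t^3-\tfrac16t^4+\cdots,
\]
giving $\mu_{\mathrm{varying}}(0.3)\approx0.0482$, while (i) asserts $|\varepsilon_\mu(0.3)|\le e^{0.045}-1\approx0.0460$.

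For (ii), $\psi(t)=\psi_0+L_\psi t$ already violates $|\psi^2-\psi_0^2|\le2L_\psi\psi_0t$. Taking $\kappa(t)=1+10^{-3}t$, $\psi(t)=1+4t$ gives $\varepsilon_{\sigma^2}(0.1)\approx0.0425$ against a claimed bound of about $0.0400$. In both examples $\kappa>0$ throughout, so positivity of the softplus gate does not rescue the statement. The paper's own proof has the same hole: it integrates $L_\kappa s\,(C_\mu+L_\phi s)$ as though it were $L_\kappa C_\mu s$, and simply asserts $|\delta(\psi^2)(s)|\le2L_\psi\psi_0 s$.

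\textbf{What can be proved is a corrected statement.} On $[0,T]$, use $s^2\le Ts$. This replaces the prefactors by $(\kappa_0L_\phi+L_\kappa C_\mu+L_\kappa L_\phi T)/L_\kappa$ and $(L_\kappa C_\sigma+L_\psi\psi_0+\tfrac12L_\psi^2T)/L_\kappa$, and the $\mathcal{O}(t^2)$ conclusion survives.

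Cleaner still, your own decomposition, which keeps the full term $-\kappa(t)\varepsilon$, needs no Gr\"onwall at all when $\kappa(t)\ge0$. The forcing does not involve $\varepsilon$, so variation of constants gives directly
\[
|\varepsilon_\mu(t)|\le\tfrac12(\kappa_0L_\phi+L_\kappa C_\mu)t^2+\tfrac13L_\kappa L_\phi t^3,\qquad |\varepsilon_{\sigma^2}(t)|\le(L_\kappa C_\sigma+L_\psi\psi_0)t^2+\tfrac13L_\psi^2t^3 .
\]

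\textbf{A smaller point.} Your justification for discarding the dissipative part is garbled: $|\kappa(t)|\le\kappa_0+L_\kappa t$ does not yield $L_\kappa t|\varepsilon|$. The correct signed estimate is $-\kappa(t)|\varepsilon|\le(-\kappa_0+L_\kappa t)|\varepsilon|\le L_\kappa t|\varepsilon|$. It needs only $\kappa_0\ge0$, the same condition behind the paper's $e^{-\kappa_0(t-s)}\le1$, and not $\kappa(t)\ge0$. It should also be run on the integral form or with Dini derivatives, since $|\varepsilon|$ is not differentiable at its zeros.
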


\begin{proof}
The proof is divided into two parts: (i) mean error; and (ii) variance error.\\
\textit{(i) Mean error: } The frozen mean satisfies $\mu_{\mathrm{frozen}}(t) = \phi_0 + (a_0 - \phi_0)e^{-\kappa_0 t}$, and the varying mean satisfies the ODE $d\mu_{\mathrm{varying}}/{dt} = \kappa(t)\bigl(\phi(t) - \mu_{\mathrm{varying}}(t)\bigr)$, both with initial condition $a_0$. Let $\varepsilon_\mu = \mu_{\mathrm{varying}} - \mu_{\mathrm{frozen}}$, so $\varepsilon_\mu(0) = 0$. Subtracting the two ODEs and writing $\delta\kappa(t) = \kappa(t)-\kappa_0$, $\delta\phi(t)=\phi(t)-\phi_0$:
\begin{equation}
\begin{split}
\frac{d\varepsilon_\mu}{dt}
= -\kappa_0\varepsilon_\mu + \kappa_0\delta\phi(t) + \delta\kappa(t)\big(\phi(t) \\ 
\qquad -\mu_{\mathrm{frozen}}(t)\big)
- \delta\kappa(t)\varepsilon_\mu(t)
\end{split}
\end{equation}
Applying the integrating factor $e^{\kappa_0 t}$ and using $e^{-\kappa_0(t-s)}\leq 1$:
\begin{equation}
\begin{aligned}
|\varepsilon_\mu(t)|
&\leq \int_0^t 
\kappa_0|\delta\phi(s)|
+ |\delta\kappa(s)|\,|\phi(s)-\mu_{\mathrm{frozen}}(s)| \\
& \quad \qquad + |\delta\kappa(s)|\,|\varepsilon_\mu(s)| ds.
\end{aligned}
\end{equation}
By Lipschitz: $|\delta\kappa(s)|\leq L_\kappa s$, $|\delta\phi(s)|\leq L_\phi s$. Since $|\phi(s)-\mu_{\mathrm{frozen}}(s)| \leq C_\mu + L_\phi s$:
\begin{equation}
|\varepsilon_\mu(t)| \;\leq\; \frac{\kappa_0 L_\phi + L_\kappa C_\mu}{2}\,t^2 \;+\; L_\kappa \int_0^t s\,|\varepsilon_\mu(s)|\,ds
\end{equation}
Differentiating w.r.t. $t$ and applying the integrating factor
$e^{-L_\kappa t^2/2}$:
\begin{equation}
\frac{d}{dt}\!\left[e^{-L_\kappa t^2/2}\,|\varepsilon_\mu(t)|\right] \;\leq\; (\kappa_0 L_\phi + L_\kappa C_\mu)\,t\,e^{-L_\kappa t^2/2}
\end{equation}
Integrating from $0$ to $t$ and multiplying through by $e^{L_\kappa t^2/2}$ give Eqn.~\ref{eq:mean_frozen_expression}:
\begin{equation}
|\varepsilon_\mu(t)| \;\leq\; \frac{\kappa_0 L_\phi + L_\kappa C_\mu}{L_\kappa}\!\left(e^{L_\kappa t^2/2} - 1\right)
\end{equation}
Expanding for $L_\kappa t^2/2 \ll 1$ gives the leading
term $\tfrac{1}{2}(\kappa_0 L_\phi + L_\kappa C_\mu)\,t^2 = \mathcal{O}(t^2)$.\\
\textit{(ii) Variance error:} By Itô's formula, the variance of the OU process with time-varying coefficients satisfies the Riccati-type ODE~\cite{riccati1724animadversiones}
\begin{equation}
\frac{d\sigma^2}{dt} \;=\; -2\kappa(t)\,\sigma^2 \;+\; \psi(t)^2,    
\end{equation}
with frozen and varying versions both initialised at $\sigma^2(0)=0$. Let $\varepsilon_{\sigma^2} = \sigma^2_{\mathrm{varying}} - \sigma^2_{\mathrm{frozen}}$, so $\varepsilon_{\sigma^2}(0)=0$. Subtracting:
\begin{equation}
\frac{d\varepsilon_{\sigma^2}}{dt} \;+\; 2\kappa_0\,\varepsilon_{\sigma^2} \;=\; -2\delta\kappa(t)\,\sigma^2_{\mathrm{varying}}(t) \;+\; \delta(\psi^2)(t), 
\end{equation}
where $\delta(\psi^2)(t) = \psi(t)^2 - \psi_0^2$.
Applying the integrating factor $e^{2\kappa_0 t}$ and using
$e^{-2\kappa_0(t-s)}\leq 1$,
$\sigma^2_{\mathrm{varying}}(s) \leq C_\sigma + |\varepsilon_{\sigma^2}(s)|$,
$|\delta\kappa(s)|\leq L_\kappa s$, and $|\delta(\psi^2)(s)|\leq 2L_\psi\psi_0 s$:
\begin{equation}
|\varepsilon_{\sigma^2}(t)| \;\leq\; \left(L_\kappa C_\sigma + L_\psi\psi_0\right)t^2 \;+\; 2L_\kappa\int_0^t s\,|\varepsilon_{\sigma^2}(s)|\,ds.
\end{equation}
Differentiating w.r.t $t$ and applying the integrating factor
$e^{-L_\kappa t^2}$:
\begin{equation}
\frac{d}{dt}\!\left[e^{-L_\kappa t^2}|\varepsilon_{\sigma^2}|\right] \;\leq\; 2(L_\kappa C_\sigma+L_\psi\psi_0)\,t\,e^{-L_\kappa t^2}.
\end{equation}
Integrating from $0$ to $t$ and multiplying through by $e^{L_\kappa t^2}$:
\begin{equation}
|\varepsilon_{\sigma^2}(t)| \;\leq\; \frac{L_\kappa C_\sigma + L_\psi\psi_0}{L_\kappa}\!\left(e^{L_\kappa t^2}-1\right). 
\end{equation}
Substituting $C_\sigma = \psi_0^2/(2\kappa_0)$ gives Eqn.~\ref{eqn:frozen_expression}.
Expanding for $L_\kappa t^2\ll 1$ gives the leading term
$\bigl(L_\kappa\psi_0^2/(2\kappa_0) + L_\psi\psi_0\bigr)t^2 = \mathcal{O}(t^2)$.
\end{proof}

\begin{figure}[ht!]
\centering
\includegraphics[width=0.45\textwidth]{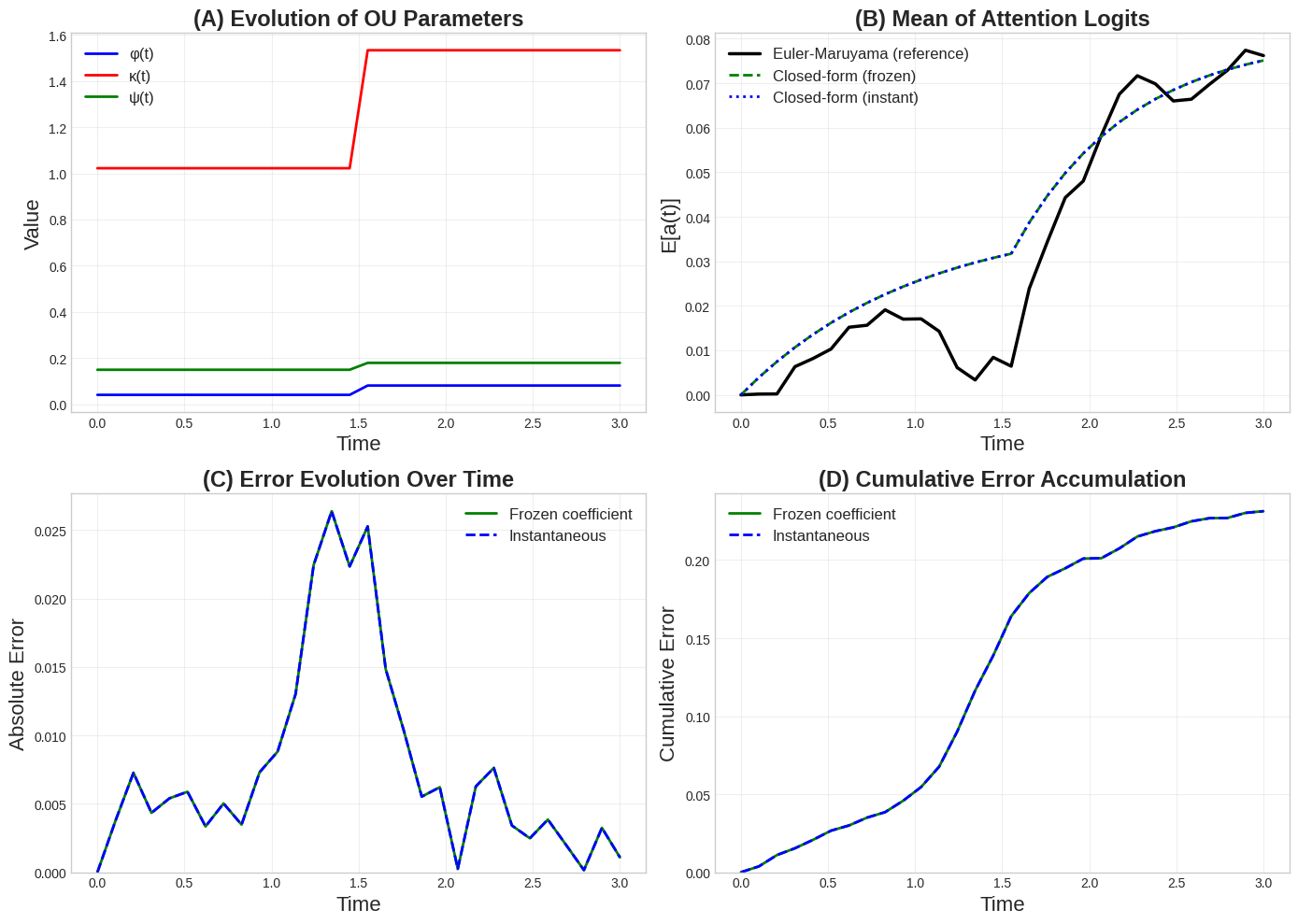}
\caption{Visualization of step-change evolution of NSAC.}\vspace{-10mm}
\label{fig:frozen}
\end{figure}

\begin{table}[h]
\centering
\caption{Frozen-coefficient approximation error analysis}
\label{tab:frozen-analysis}
\begin{tabular}{l c c c c c}
\hline
\makecell{\textbf{Variation}} & 
\makecell{\textbf{Mean} \\ (\%)} & 
\makecell{\textbf{Max} \\ (\%)} & 
\makecell{\textbf{$\mathbf{\phi}$} \\ (\%)} & 
\makecell{\textbf{$\mathbf{\kappa}$} \\ (\%)} & 
\makecell{\textbf{$\mathbf{\psi}$} \\ (\%)} \\
\hline
Slow        & 2.32 & 4.63  & 0.50 & 0.26 & 0.40  \\
Fast        & 4.81 & 5.16  & 1.63 & 0.88 & 1.10  \\
Step        & 5.36 & 5.671  & 3.51 & 2.11 & 0.96  \\
Oscillatory & 5.87 & 6.57  & 4.28 & 3.10 & 4.54  \\
\hline
\end{tabular}
\end{table}
\textbf{Empirical Analysis:} We now empirically analyze the effect of frozen coefficients. Four different data regimes are considered: (i) slow variation; (ii) fast variation; (iii) step changes; and (iv) oscillatory behavior. The Figure~\ref{fig:frozen} shows the qualitative illustration of step changes and Table~\ref{tab:frozen-analysis} summarizes the quantitative results. For typical slowly varying sequences, where $\phi_i$, $\kappa$, and $\psi$ vary by 0.50\%, 0.26\%, and 0.40\%, respectively, the frozen approximation achieves a mean error of only 2.32\% and a maximum error of 4.63\%. Under more challenging conditions, the mean error increases to 4.81\% for fast variation, 5.36\% for step changes, and for oscillatory regimes remain well-behaved with a mean error of 5.87\%.\\
\emph{Interpretation:} The values of $\phi$, $\kappa$, and $\psi$ trained using $\mathcal{NN}_{\text{backbone}}$ serve as finite-difference estimates of $L_\phi$, $L_\kappa$, and $L_\psi$, respectively. The predicted $\mathcal{O}(t^2)$ error bounds are validated empirically across all four regimes. In the slow-variation regimes, the error is approximately $2.32\%$, consistent with the expected quadratic behavior. By contrast, the fast-variation and step-change regimes exhibit larger errors ($4.81\%$--$5.36\%$), reflecting a larger effective Lipschitz constant and, consequently, larger analytical prefactors. Overall, the frozen-coefficient approximation remains accurate across a diverse range of regimes, with the error remaining bounded even under substantial parameter variations. Since real-world irregularities rarely exceed $10\%$, these results support the use of the frozen-coefficient closed-form solution for NSAC.

\subsection{Proof for Theorem~\ref{theorem:state_stability}}\label{appendix:state_stability}

The proof uses a standard forward-invariance argument: we verify that
the vector field in drift ODE ($d\mu_t^{(i)}/{dt} = \sum_{j=1}^{M} \kappa_{i,j}(\Phi_{i,j} - \mu_t^{(i)})$) is non-positive at the upper
boundary and non-negative at the lower boundary of
$[\Phi_{\min},\Phi_{\max}]$.

\textit{Upper bound:} At $\mu = \Phi_{\max}$:
\begin{equation}
  \frac{d\mu_t^{(i)}}{dt}\Bigg|_{\mu\,=\,\Phi_{\max}} = \sum_{j=1}^{M}\kappa_{i,j} \left(\Phi_{i,j} - \Phi_{\max}\right) \leq 0,
  \label{eq:upper-boundary}
\end{equation}
since $\Phi_{i,j} \leq \Phi_{\max}$ for every $j$ by definition,
and each weight $f_{\kappa}([\mathbf{q_i};\,\mathbf{k_j}]) > 0$.
Hence the drift is non-positive at the upper boundary, and
$\mu_t^{(i)}$ cannot exceed $\Phi_{\max}$.

\textit{Lower bound:} At $\mu = \Phi_{\min}$:
\begin{equation}
  \frac{d\mu_t^{(i)}}{dt}\Bigg|_{\mu\,=\,\Phi_{\min}} = \sum_{j=1}^{M} \kappa_{i,j}(\Phi_{i,j} - \Phi_{\min}) \;\geq\; 0,
  \label{eq:lower-boundary}
\end{equation}
since $\Phi_{i,j} \geq \Phi_{\min}$ for every $j$.
Hence the drift is non-negative at the lower boundary, and
$\mu_t^{(i)}$ cannot fall below $\Phi_{\min}$.

\textit{Special case $M=1$:} The drift equation collapse to $d\mu_t^{(i)}/dt = \kappa(\phi - \mu_t^{(i)})$ and applying the same boundary check:
\begin{align}
  \frac{d\mu_t^{(i)}}{dt}\Bigg|_{\mu\,=\,\phi_{\max}}
  &= \kappa(\phi - \phi_{\max}) \leq 0,
  \label{eq:M1-upper}\\[4pt]
  \frac{d\mu_t^{(i)}}{dt}\Bigg|_{\mu\,=\,\phi_{\min}}
  &= \kappa(\phi - \phi_{\min}) \geq 0,
  \label{eq:M1-lower}
\end{align}
since $\phi \leq \phi_{\max}$ and $\phi \geq \phi_{\min}$ by
definition of the range of $f_{\phi}$. With the $\tanh$ nonlinearity enforcing $\phi \in (-1,1)$, we have $\phi_{\min} \leq 0 \leq \phi_{\max}$, so the implementation's initialization $\mu_0^{(i)} = 0$ satisfies $\mu_0^{(i)} \in [\phi_{\min},\phi_{\max}]$.
The trajectory converges monotonically to $\phi$ and remains within $[\phi_{\min},\phi_{\max}]$ throughout.
\begin{remark}
This proof establishes the boundedness of the mean attention-logit trajectory, ensuring that, under a positive $\kappa$, the dynamics remain well-behaved and bounded.
\end{remark}

\section{Evaluation Details}
In this section, we provide comprehensive definitions of evaluation metrics used to assess the model.

\subsection{Evaluation Metrics}\label{appendix:metrics}
\subsubsection{Mean Squared Error (MSE)}
MSE is a widely used metric for evaluating regression models, measuring the average squared difference between predicted values $\hat{y}_i$ and true targets $y_i$:
\begin{equation}
\text{MSE} = \frac{1}{N} \sum_{i=1}^{N} (\hat{y}_i - y_i)^2,
\end{equation}
where $N$ is the total number of samples. Lower MSE values indicate that predictions are closer to the true targets.

\subsubsection{Negative-Log Likelihood (NLL)} 
NLL is a strictly proper scoring rule that evaluates the quality of probabilistic regression by measuring how much probability mass the predicted Gaussian distribution assigns to the true target. For a predicted mean $\mu_i$, log-standard deviation $s_i = \log \sigma_i$, and true target $y_i$, the NLL is defined as
\begin{equation}
\text{NLL} =  \frac{1}{N}\sum_{i=1}^{N} \left[ \frac{1}{2}\log(2\pi) +s_i + \frac{(y_i-\mu_i)^2}{2e^{2s_i}} \right] 
\end{equation}
Where N is the total number of samples. Intuitively, NLL jointly evaluates the predictive accuracy and uncertainty estimates by penalizing both inaccurate prediction and poorly estimated predictive variance. Because continuous probability densities are not bounded by one, NLL values may be negative when the predictive variance is small and predictions closely match the targets. Therefore, lower NLL values indicate that the predicted distribution assign higher likelihood to the observed target, reflecting better probabilistic modeling.

\subsubsection{Continuous Ranked Probability Score (CRPS)}
CRPSis a strictly proper scoring rule that assesses both calibration and sharpness. 
For a predicted cumulative distribution function (CDF) $\Phi_i$ and true target $y_i$, the CRPS is defined as
\begin{equation}
\text{CRPS}(\Phi_i, y_i) = \int_{-\infty}^{\infty} \big(\Phi_i(t) - \mathbf{1}(t \ge y_i)\big)^2 \, dt,
\end{equation}
where $\mathbf{1}(\cdot)$ is the indicator function. Intuitively, CRPS measures the squared distance between the predicted CDF and the empirical step function at the observation. Lower CRPS values indicate more accurate and well-calibrated probabilistic forecasts, as the metric jointly rewards correct central tendency and uncertainty estimation.

\subsubsection{Expected Calibration Error (ECE)} 
ECE is commonly used as probabilistic classification metric but we adopt it for continuous regression prediction. For each prediction $(\mu_i, \sigma_i)$ and true target $y_i$, we compute the standardized absolute error $z_i = |y_i - \mu_i|/\sigma_i$ and map it to a confidence score $c_i = 2\Phi(z_i) - 1$, where $\Phi$ is the standard normal cumulative distribution function (CDF). Confidence scores are partitioned into $B$ uniform bins (10 bins in our implementation), and for each bin we calculate the empirical coverage, defined as the fraction of samples with $z_i$ below the bin midpoint. The ECE is then the weighted average absolute difference between predicted confidence and empirical coverage across bins.
\begin{equation}
\mathrm{ECE} = \frac{1}{N} \sum_{k=1}^{B} \left| \hat{p}_k - \hat{c}_k \right| \cdot |S_k|  
\end{equation}
where, $N$ be the total number of samples, and $S_k$ the set of samples in bin $k$. Lower ECE indicates better alignment between predicted uncertainties and observed errors, enabling reliable uncertainty intervals. 

\begin{table}[t]
\centering
\caption{Summary of Key Hyperparameters of All Experiments}
\label{tab:hparams}
\resizebox{1.0\columnwidth}{!}{%
\begin{tabular}{lccc}
\toprule
\textbf{Param.} & \textbf{Exp.~1 to 4} & \textbf{Udacity} & \textbf{CarRacing} \\
\midrule
Conv-layers & -- & 5×\textbf{2D}(24--64@5--3, ELU) & 3×TD-\textbf{2D}(32--640@3--5)   \\
NSAC/NAC & 64-d, 16h & 100-d, 20h & 64-d, 16h \\
Sparsity & 0.5 & 0.5 & 0.5 \\
OOD params.~($\mu_{pert}$,$\sigma_{pert}$) & (0,5.0) & (0,5.0) & (0,5.0) \\
Dense & -- & -- & -- \\
MC-steps &5 & 5 & 5 \\
Opt. & AdamW & AdamW & AdamW \\
LR & 0.001 & 0.0001 & 0.0001 \\
Batch & 64 & 40 & 32 \\
Epochs & 100 & 10 & 20 \\
\bottomrule
\end{tabular}}
\begin{minipage}{\columnwidth}
\footnotesize
\textbf{Note:} TD = TimeDistributed; \textbf{1D}/\textbf{2D} = Conv1D/2D; $d$ = model dimension; $h$ = attention heads. \\
\textbf{Baseline clarification:} For Deep-Ensemble we use $\times$3 models and for MC-Dropout, we placed dropout layer as penultimate layer with drop\_rate=0.1 with MC steps same as NSAC. 
\end{minipage}
\vspace{-3mm}
\end{table}

\subsection{Experiment Details}\label{appendix:experimental_detail}
In this section, we briefly overview the methodology of each experiments including dataset details, preprocessing and neural network architecture utilized to carry the experiments. The Table~\ref{tab:hparams} presents the neural networks architecture/hyperparameters utilized during the training/testing.

\subsubsection{Multivariate Regression}
\textbf{Dataset Explanation:} The Boston Housing dataset comprises 506 samples of residential areas in the Boston suburbs, with each sample described by 13 features, including crime rate, average number of rooms per dwelling, and accessibility to highways. The target variable is the median value of owner-occupied homes. The Kin8nm dataset contains 8,192 samples, where each instance consists of 8 continuous input features derived from physical measurements and a single continuous target variable. It is a synthetic benchmark dataset designed to evaluate regression models on smooth but nonlinear input-output relationships. \\
\textbf{Preprocess:} Prior to training both dataset were normalized using \texttt{MinMaxScaler} function. \\
\textbf{Neural Network:} A single NSAC layer with a model dimension of 64, 16 heads, Top-\emph{K} of 8, and sparsity of 0.5 was used for both tests, with all baselines configured with the same specifications.

\subsubsection{Multivariate Long-range forecasting}
\textbf{Dataset Explanation:} We utilized two well-known long-range modeling datasets: (i) ETTm1; and (ii) Jena-Climate. ETTm1 consists of 7 features and is used for long-term oil temperature forecasting. It provides data at 15-minute intervals, capturing real-world temporal patterns for time-series analysis. Jena-Climate contains 14 features capturing various atmospheric and environmental measurements. It provides data at 10-minute intervals for modeling and forecasting climate-related time series. \\
\textbf{Preprocess:} Prior to training both dataset were normalized using \texttt{MinMaxScaler} function. \\
\textbf{Neural Network:} A single NSAC layer with a model dimension of 64, 16 heads, a top-k of 8, and a sparsity of 0.5 was used for both tests, with all baselines configured with the same specifications.

\subsubsection{Industry~4.0}
\textbf{Dataset Explanation:} We utilized three well-known bearing degradation datasets to conduct our experiments: XJTU-SY, PRONOSTIA, and HUST. Each dataset contains 1D vibrational signals captured from at least two sensors under three distinct operating conditions. To train the neural network, we used the first three bearings from operating condition 1 of XJTU-SY and evaluated the model in zero-shot manner on PRONOSTIA and HUST, also using first operating condition. This setup allows us to assess OOD calibration and zero-shot testing performance of NSAC.\\
\textbf{Preprocessing:} The raw vibrational signals needs to be converted into a meaningful set of features to capture degradation patterns. We followed the feature engineering approach proposed in \cite{razzaq2025carle}, which employs a wavelet transform to extract seven physically interpretable features, including energy, dominant frequency, skewness, and others. \\
\textbf{Neural Network:} A single NSAC layer with a model dimension of 64, 16 heads, a top-k of 8, and a sparsity of 0.5 was used for both tests, with all baselines configured with the same specifications.

\subsubsection{Autonomous Vehicle}
\textbf{Data Curation:} We collected training data for both the Udacity and OpenAI-CarRacing environments by manually playing each game for 30 minutes, recording images and the corresponding steering values. \\
\textbf{Preprocessing:} For Udacity, we applied the preprocessing steps proposed in~\cite{shibuya_car_behavioral_cloning}, while for OpenAI-CarRacing, we did not apply any preprocessing.\\
\textbf{Neural Network:} Both neural networks are trained end-to-end and consist of a series of convolutional heads that extract spatial features, which are then passed to NSAC or baseline models for temporal modeling. Detailed network architecture and all hyperparameters are provided in Table~\ref{tab:hparams}.

\begin{figure}[t]
\centering
\includegraphics[width=1.0\columnwidth]{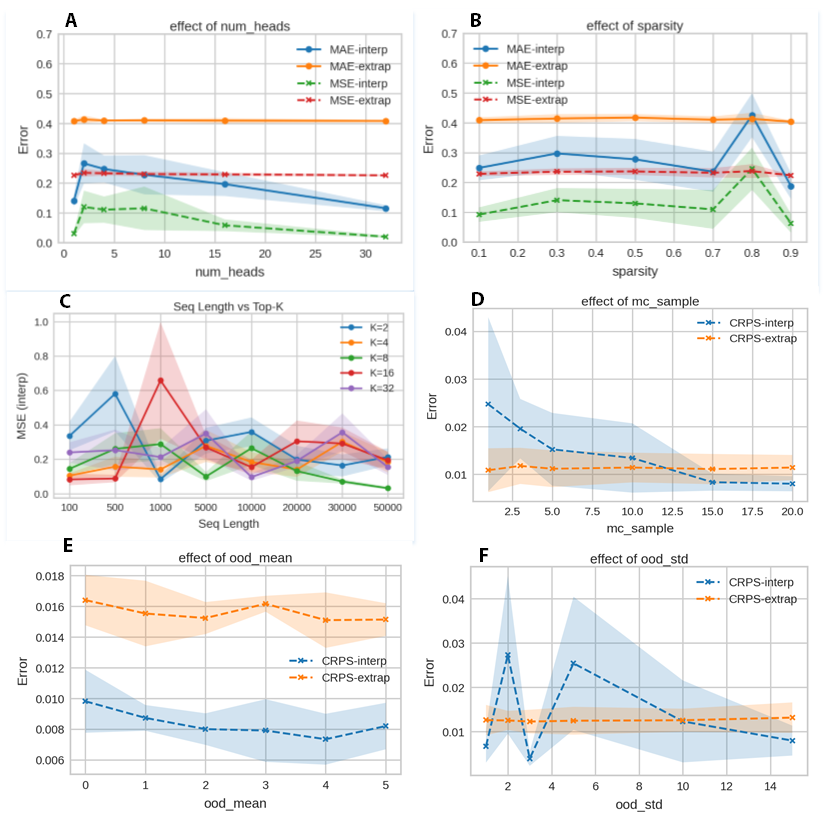}
\caption{Visualization of impact of key hyperparameters of NSAC. (\textbf{A}) No. of attention heads; (\textbf{B}) Sparisity level; (\textbf{C}) Sequence length vs. Top-\emph{K} selection; (\textbf{D}) No. of MC-Samples; (\textbf{E}) OOD\_mean ($\mu_{\mathrm{pert}}$); and (\textbf{F}) OOD\_std ($\sigma_{\mathrm{pert}}$).} \vspace{-5mm}
\label{fig:key_hyp}
\end{figure}

\section{Additional Experiments}\label{appendix:additional_experiments}
In this section, we provide additional experiments including: (i) Ablation study of key-hyperparameters; (ii) Closed-loop trajectory analysis of AVs; and (iii) Scalability \& Efficiency analysis.
\subsection{Ablation Analysis on Key Parameters}\label{appendix:ablation_analysis}
In this experiment, we analyze the effects of key NSAC hyperparameters: (i) the number of attention heads (1--32); (ii) sparsity (0.1--0.9); (iii) Top-\emph{K} selection (2--32) across sequence lengths (100--50000); (iv) MC samples (1--20); effects of (v) $\mu_{\mathrm{pert}}$ (0--5); and (vi) $\sigma_{\mathrm{pert}}$ (0--15). We conduct this evaluation on a modified irregular spiral dataset while fixing the $d_{model}$ to 64. Figure~\ref{fig:key_hyp} presents the quantitative results of this study.\\
For the first two analyses (Figure~\ref{fig:key_hyp}(A,B)), we report both MSE and MAE under the interpolation and extrapolation regimes. Both the number of heads and sparsity exhibit non-monotonic behavior. Increasing the number of heads improves performance, with 32 heads achieving the lowest MSE and MAE across both regimes. Similarly, although 90\% sparsity achieves the lowest MSE, excessive sparsity may reduce representational capacity. Sparsity levels between 0.2 and 0.7 yield nearly identical performance, with 0.5 providing the most balanced trade-off. Figure~\ref{fig:key_hyp}(C) shows the effect of Top-\emph{K} selection across varying sequence lengths under the interpolation regime. Among all the settings, \emph{K}=8 consistently achieves the lowest error while maintaining a moderate computational cost. For the last three experiments, we report the CRPS. Increasing the number of MC samples improves performance, as shown in Figure~\ref{fig:key_hyp}(D), although the gains diminish and performance saturates beyond 15 samples. Both $\mu_{\mathrm{pert}}$ (Figure~\ref{fig:key_hyp}(E)) and $\sigma_{\mathrm{pert}}$ (Figure~\ref{fig:key_hyp}(F)) exhibit non-monotonic effects, with the best performance achieved at $\mu_{\mathrm{pert}} = 4$ and $\sigma_{\mathrm{pert}} = 5$. However, these settings require careful tuning. Therefore, we choose $\mu_{\mathrm{pert}} = 0$ and $\sigma_{\mathrm{pert}} = 5$ as the default configuration and use them for all experiments.

\begin{figure*}[t]
\centering
\includegraphics[width=0.85\textwidth]{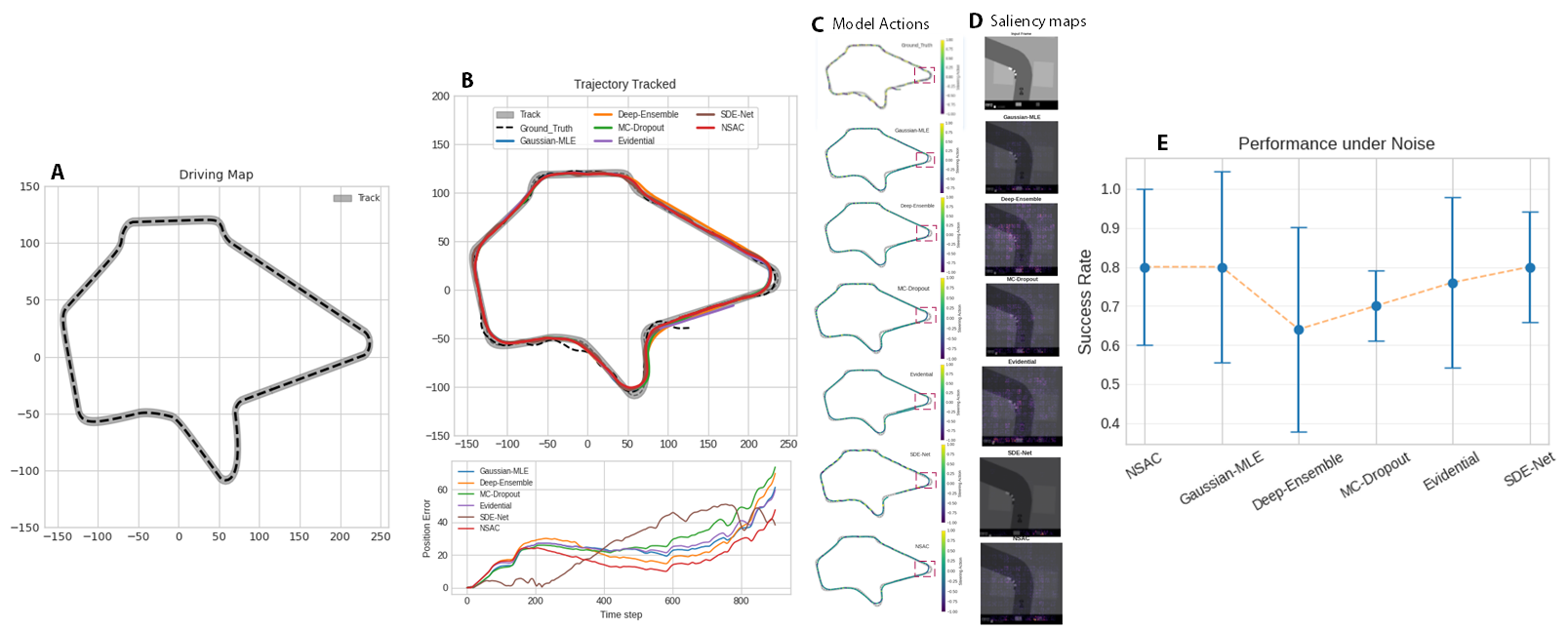}
\caption{Closed-loop driving analysis on OpenAI-CarRacing. (\textbf{A}) Test drive map; (\textbf{B}) Trajectory followed by positional error to the center of the road; (\textbf{C}) Individual model actions at each step on the map; (\textbf{D}) Visual saliency maps in the highlighted region; (\textbf{E}) Noise test.}\vspace{-6mm}
\label{fig:closed-loop}
\end{figure*}

\subsection{Closed-loop Analysis of AVs}\label{appendix:closed-loop}
Beyond the predictive metrics, we further conduct a closed-loop evaluation on the OpenAI CarRacing environment using a controlled test lap configuration, as shown in Figure~\ref{fig:closed-loop}(A). This setting enables a qualitative analysis of steering behavior under sequential decision-making. The figure illustrates the trajectories generated by each model, along with the corresponding position error relative to the road centerline. The NSAC achieves the lowest positional error, indicating that it most consistently remains aligned with the center of the road.\\
Figure~\ref{fig:closed-loop}(B) presents the sequence of actions produced by each model. Notably, the ground-truth trajectory itself exhibits distortions, which propagate into the learned baselines. In contrast, the NSAC, implemented with the penultimate NSAC layer, exhibits a self-regulating behavior that filters out such distortions and maintains close adherence to the road structure. Although SDE-Net is able to complete the lap, its predicted actions are severely distorted, indicating instability in sequential control.\\
In Figure~\ref{fig:closed-loop}(C), we further analyze saliency maps using Grad-CAM~\cite{selvaraju2017grad} to identify the regions of the input that influence action decisions. The NSAC consistently focuses on the full road curvature when actions are selected, suggesting that causally meaningful attention is aligned with the underlying driving structure. In contrast, other baselines exhibit less consistent and often scattered attention patterns across irrelevant regions of the environment. SDE-Net in particular appears to exhibit weak and unreliable visual grounding during decision-making, which explains its degraded action quality.\\
We additionally evaluate robustness to environmental perturbations by modifying the color scheme of the simulation environment while preserving the underlying road trajectory. This introduces visual noise without altering the control task. We conduct 10 runs with 10 trials each and report the mean success rate and standard deviation. All the models perform reasonably well under this setting. NSAC achieves an average success rate of approximately 80\%, which is comparable to that of GMLE and SDE-Net, while all the remaining baselines exceed a 60\% success rate.

\begin{table}[ht!]
\centering
\caption{Run-Time \& Memory Results}
\label{tab:run_time}
\resizebox{1.0\columnwidth}{!}{%
\begin{tabular}{lccccc}
\toprule
\textbf{Model} & \makecell{\textbf{Run-Time}\\(s)} & \makecell{\textbf{Throughput}\\(seq/s)} & \makecell{\textbf{Peak Memory} \\(MB)} & \makecell{\textbf{Params.}} \\
\midrule
GMLE & 14.47\textsuperscript{\scriptsize $\pm$0.21}  & 0.07 & 75.78 & 36218 \\
DE ($N_m=3$) & 39.02\textsuperscript{\scriptsize $\pm$0.18}  & 0.07 & 75.98 & 36218 \\
MCD ($N_{mc}=5$) & 14.09\textsuperscript{\scriptsize $\pm$0.23}  & 0.07 & 75.89 & 36218 \\
DER & 14.21\textsuperscript{\scriptsize $\pm$0.23}  & 0.07 & 76.00 & 36416 \\
\midrule
NSAC ($N_{mc}=5$)& 14.14\textsuperscript{\scriptsize $\pm$0.46}  & 0.07 & 75.94 & 37278 \\
\bottomrule
\end{tabular}}
\begin{minipage}{\columnwidth}
\footnotesize
\vspace{1mm}
\textbf{Note:} $N_m$ (No. of models);  $N_{mc}$ (No. of MC samples).\vspace{-5mm}  
\end{minipage}
\end{table}

\subsection{Scalability \& Efficiency Analysis}
We evaluate the efficiency and scalability of NSAC’s solver-free forward-pass update on configurations used for spiral with synthetic inputs (batch size 1, sequence length of 1000). Experiments are conducted on Google Colab T4-GPU and compared against all post-hoc methods using solver-free NAC as the penultimate layer for fair comparison. We report the mean and standard deviation over five forward passes for runtime, throughput, and parameter count (see Table~\ref{tab:run_time}). NSAC achieves a mean run-time comparable to the strongest baseline (14.13 s) with a throughput of 0.07 and 75.94 MB peak memory. It incurs slightly higher parameter count than baselines (37288 vs. 36218) due to an additional backbone head.


\end{document}